\newtheorem{theorem}{Theorem}
\newtheorem{definition}{Definition}
\begin{document}

\begin{frontmatter}

\title{Addressing Expensive Multi-objective Games with Postponed Preference Articulation via Memetic Co-evolution}

\author[wutaddress]{Adam {\.Z}ychowski\corref{cor1}}\ead{a.zychowski@mini.pw.edu.pl}
\author[ntuaddress]{Abhishek Gupta}
\author[wutaddress]{Jacek Ma{\'n}dziuk}
\author[ntuaddress]{Yew Soon Ong}

\address[wutaddress]{Faculty of Mathematics and Information Science, Warsaw University of Technology, \\Koszykowa 75, 00-662 Warsaw, Poland}
\address[ntuaddress]{School of Computer Science and Engineering, Nanyang Technological University, \\Block N4, Nanyang Avenue, Singapore 639798}
\cortext[cor1]{Corresponding author}

\begin{abstract}
This paper presents algorithmic and empirical contributions demonstrating that the convergence characteristics of a co-evolutionary approach to tackle Multi-Objective Games (MOGs) with postponed preference articulation can often be hampered due to the possible emergence of the so-called Red Queen effect. Accordingly, it is hypothesized that the convergence characteristics can be significantly improved through the incorporation of memetics (local solution refinements as a form of lifelong learning), as a promising means of mitigating (or at least suppressing) the Red Queen phenomenon by providing a guiding hand to the purely genetic mechanisms of co-evolution. Our practical motivation is to address MOGs of a time-sensitive nature that are characterized by computationally expensive evaluations, wherein there is a natural need to reduce the total number of  true function evaluations consumed in achieving good quality solutions. To this end, we propose novel enhancements to co-evolutionary approaches for tackling MOGs, such that memetic local refinements can be efficiently applied on evolved candidate strategies by searching on computationally cheap surrogate payoff landscapes (that preserve postponed preference conditions). The efficacy of the proposal is demonstrated on a suite of test MOGs that have been designed.
\end{abstract}

\begin{keyword}
multi-objective games, Red Queen effect, surrogate-assisted memetic algorithm
\end{keyword}

\end{frontmatter}



\section{Introduction}
\label{sec:Intrduction}

Many practical problems can be modeled and resolved with game theory methods. In real world problems decisions are usually made with multiple objectives or lists of payoffs. The notion of vector payoffs for games was originally introduced by Blackwell~\cite{Blackwell1956} and later extended by Contini~\cite{Contini1966}. Such games are named Multi-Objective Games (MOGs). MOGs may have many practical applications in engineering, economics, cybersecurity~\cite{Eisenstadt2017} or Security Games~\cite{Brown2014}, where real-life situations can be easily modeled as a game, and their solutions help decision-makers make the right choice in multi-objective environments.

Most existing studies about MOGs concentrate on differential games and defining the equilibrium for them. The most common approach is the Pareto-Nash equilibrium proposed in~\cite{Lozovanu2005}. The Pareto-Nash equilibrium uses the concept of cooperative games, because sub-players under the same coalitions should, according to the Pareto notion, optimize their vector functions on a set of strategies. This notion also takes into account the concept of non-cooperative games, because coalitions are interested in preserving the Nash equilibrium between coalitions. First attempts to solving MOGs were using multi-parametric criteria linear programming, for example in~\cite{Zeleny1975} (for zero-sum games) and~\cite{Contini1966} (for non-zero-sum variants). Also artificial intelligence-based approaches, such as fuzzy functions, have been applied to MOGs. For example, in~\cite{Sakawa2002}, the objectives are aggregated to one artificial objective for which the weights (players' preferences towards the objectives) are modeled with fuzzy functions.

Generally speaking, the most popular method of solving MOGs is to specify players objective preferences and define the utility function, for example a weighted sum, to transform the MOG into a surrogate Single Objective Game (SOG)~\cite{Shapley1959,Zeleny1975}. However, in real-life applications such an approach may not be sufficient, because preferences are often postponed until tradeoffs are revealed. Furthermore, in many cases decision-makers objectives are conflicting which makes specifying the utility function a priori more difficult.

There is a lack of literature on the topic of MOGs, especially which involves players with postponed preference articulation. Thus, there is a significant gap in the availability of algorithms for tackling real-world MOGs. First of all, an efficient numerical scheme is needed that is able to present decision-makers with the optimal tradeoffs in competitive game settings comprising multiple conflicting objectives (somewhat similarly to the case of standard multi-objective optimization~\cite{Liu2016,Mirjalili2017,Gholamian2007}). Only then can an informed postponed choice be made with regard to ascertaining the most preferred strategy to implement. The present paper takes a step towards filling this algorithmic void. First formalization of such MOGs can be found in~\cite{Eisenstadt2014}. In~\cite{Eisenstadt2016}, the definition of \textit{rationalizable} strategies in such games is provided together with a suggestion about how these strategies could be found (preliminary discussions are provided in Section~\ref{sec:preliminaries} of this paper), analyzed, and used for choosing the preferred strategy.

Co-evolutionary adaptation is a viable method for solving game theory problems and is successfully used in traditional SOGs~\cite{Yuhui2002,Fabris2012}. Preliminary works in~\cite{Eisenstadt2015} showed that in principle a co-evolutionary algorithm may be applied to MOGs as well, with the population-based evolutionary algorithms being particularly well-suited for handling multiple objectives simultaneously (as a consequence of the implicit parallelism of population-based search). However, a canonical co-evolutionary approach to solving MOGs has certain drawbacks. One of them is the emergence of a phenomenon named the Red Queen effect which often hampers the convergence characteristics of the algorithm. In many time-sensitive applications involving computationally expensive evaluations, such a slowdown must be avoided.

The Red Queen principle was first proposed by the evolutionary biologist L. van Valen in 1973~\cite{Valen1973}. It states that populations must continuously adapt to survive against ever-evolving competitors. It is based on a biologically grounded hypothesis that species continually need to change to keep up with the competitors (because when species stop changing, they will lose to the other species that do continue to change). The Red Queen effect could have positive as well as negative consequences. For example, co-evolution between predators and preys where the only way the predator can compensate for a better defense by the prey (e.g. rabbits running faster) is by developing a better offense (e.g. foxes running faster). This leads to improvement of the skills (running faster) of both species. In another example, consider trees in a forest which compete for access to sunlight. If one tree grows taller than its neighbours it can capture part of their sunlight. This causes the other trees to grow taller, in order not to be overshadowed. The effect is that all trees tend to become taller and taller, but still getting on average the same amount of sunlight. Optimizing access to sunlight for each individual tree does not lead to optimal performance for the forest as a whole~\cite{Heylighen1993}. Notably, such continuous adaptation as a result of the Red Queen effect occurs not only in species co-evolution, but also in disease mutations, business competitors or macroeconomics changes.

In a co-evolutionary algorithm for solving MOGs, the emergence of the Red Queen effect may hamper the convergence characteristics of the algorithm, since many function evaluations are needed to overcome the continuous adaptations and drive the population to a more-or-less steady state of reasonably good solutions. It  must be observed, however, that the presence of the Red Queen effect in a given co-evolutionary approach is - in general - only hypothetical as tracking the Red Queen phenomenon is usually a complex and non obvious process~\cite{Cliff1995}.
Regardless of the detailed reasons, the decelerated convergence is often only an artefact of the co-evolutionary method, and may not be at all related to the underlying MOGs. For the cases where MOGs are time-sensitive and/or involve computationally expensive evaluations, the Red Queen effect is unaffordable from an algorithmic standpoint. Thus, in this paper, an approach to mitigating (or at least suppressing) the deleterious consequence of the Red Queen effect is proposed. It is achieved by applying a local solution refinement technique (alternatively known as lifelong learning of an individual in a population) - in the spirit of memetic algorithms.

Canonical memetic algorithms~\cite{Ongetal2010,Ongetal2011,Neri2012} enhance population-based Evolutionary Algorithms (EA) by means of adding a distinctive local optimization phase. The underlying idea of memetics is to use local optimization techniques or domain knowledge to improve potential solutions (represented by individuals in a population) between consecutive EA generations. Drawing from their sociological interpretation, memes are seen as basic units of knowledge that are passed down from parents to offspring in the form of ideas (procedures) that serve as guidelines for superior performance. Thus, while genes leap from body to body as they propagate in the gene pool, memes are thought of as leaping from brain to brain as they propagate in the meme pool. A synergetic combination of simple local improvement schemes with evolutionary operators leads to complex and powerful solving methods which are applicable to a wide range of problems~\cite{Memetic:Review}, and currently serve as one of the fastest growing subfields of Evolutionary Computation research.
Local improvement of temporary solutions represented by genes is deemed to be of paramount importance in the context of the presumed existence of the Red Queen effect as it should strongly mitigate the rolling-horizon of the individual fitness evaluation. The main rationale for such a claim is that the lifelong learning module introduced by memetics can provide a guiding hand to the purely genetic mechanisms of co-evolutionary algorithms, thereby potentially suppressing the intensity of the Red Queen effect in our search for equilibrium solutions to the underlying MOG. Due to the complexity of rigorous theoretical analysis, we attempt to substantiate our claims experimentally in this paper.

To summarize, the main contribution of this paper is a novel enhancement of co-evolutionary algorithms for MOGs. In particular, memetic local refinements are proposed on evolved candidate strategies, as a means of improving convergence behavior. Importantly, in order to make such local refinements computationally viable in competitive multi-objective game settings, we incorporate an approach that reduces sets of payoff vectors in objective space to a single representative point that preserves the postponed preference articulation condition (details are provided in Section~\ref{sec:canonicalAlgorithm}). Thereafter, a surrogate model of the representative point can be built, which allows the local improvements to be carried out efficiently by searching on the surrogate landscape~\cite{Goh2011,Lim2010}. As a result of the proposal, it is considered that MOGs of a computationally expensive nature can be effectively handled, at negligible computational overhead involved in surrogate modeling.

The remainder of this paper is arranged as follows. Section~\ref{sec:preliminaries} presents the general formulation and fundamentals of MOGs. An overview of the baseline co-evolutionary approach for solving MOGs based on~\cite{Eisenstadt2015} is presented in Section~\ref{sec:canonicalAlgorithm}. Section~\ref{sec:memeticAlgorithm} provides a more detailed description of the proposed surrogate-assisted memetic algorithm tailored for computationally expensive MOGs. In Section~\ref{sec:experiments} experimental studies for both algorithms are carried out on a suite of test MOGs of varying degrees of complexity that have been designed based on an intuitively visualizable differential game. Results are discussed in the context of convergence and suppression of the Red Queen effect. The last section is devoted to conclusions and directions for future research.


\section{Preliminaries on MOGs}
\label{sec:preliminaries}

\subsection{Problem definition}

Single-act multi-objective games considered in this paper can be formally described as follows. Let $P_1$ and $P_2$ be the players competing against each other, and $S_1 = \{\boldsymbol{s^1_1}, \boldsymbol{s^2_1}, \ldots , \boldsymbol{s^I_1}\}$, $S_2 = \{\boldsymbol{s^1_2}, \boldsymbol{s^2_2}, \ldots , \boldsymbol{s^J_2}\}$ be complete sets of their possible strategies, respectively. Each candidate strategy is a vector of decision parameters: $\boldsymbol{s^i_1} = [s_1^{i(1)}, s_1^{i(2)}, \ldots, s_1^{i(N_1)}] \in \mathbb{R}^{N_1}, \boldsymbol{s^j_2} = [s_2^{j(1)}, s_2^{j(2)}, \ldots, s_2^{j(N_2)}] \in \mathbb{R}^{N_2}$, where $N_1$ and $N_2$ are the numbers of players' decision parameters. $\boldsymbol{\bar{f}_{i,j}}$ represents the payoff vector corresponding to a game evaluated using strategies $\boldsymbol{s_1^i}$ and $\boldsymbol{s_2^j}$. Without loss of generality, we assume that the goal of $P_1$ is to minimize the payoff, while the goal of $P_2$ is to maximize the payoff.

Since both players do not know how the opponent evaluates their objectives in a postponed preference articulation setting, each player takes a worst-case approach. Thus, players must somehow take into account all possible moves available to the opponent. From  $P_1$'s perspective, the goal may be modeled as minimizing the
objective function vector assuming the best possible opponent strategies: $min_{s_1^i\in S_1}max_{s_2^j\in S_2}\boldsymbol{\bar{f}_{i,j}}$. In contrast, player $P_2$ aims at maximizing the objectives while considering the best strategies for the minimizer: $max_{s_2^j\in S_2}min_{s_1^i\in S_1}\boldsymbol{\bar{f}_{i,j}}$. Note that, the above \textit{max} and \textit{min} operators applied to vector-valued payoffs are ill-defined. As one possible alternative, their meaning can be formalized by means of \textit{domination relations} between payoff vectors, as described in subsection~\ref{sec:dominationrelations} below.

\subsection{Solution approach}

Contrary to SOGs, in games with multiple objectives, a universally optimal strategy usually does not exist. For this reason, the notion of Pareto optimality is used based on domination relations between individual vectors as well as sets of vectors. Most importantly, such deductions can be found without the need to specify objective preferences, which aligns well with our basic premise of MOGs with postponed preference articulation.

Definitions presented in the reminder of this section follow the discussions in~\cite{Eisenstadt2016}. Herein, we only provide a brief overview of the main ideas for the sake of brevity.

\subsection{Domination relations}
\label{sec:dominationrelations}

To resolve domination relation between sets of vectors, first the domination relation between individual vectors must be defined.

\begin{definition}{\textbf{Domination relation between vectors}}

Let $\textbf{f} = [f_1,f_2,\ldots,f_K] \in \mathbb{R}^K $ and $\textbf{h} = [h_1,h_2,\ldots,h_K] \in \mathbb{R}^K$ be two vectors in the objective space. A vector $\textbf{f}$ \textit{dominates} vector $\textbf{h}$ in a maximization problem $(\textbf{f} \overset{\text{max}}\succ \textbf{h})$, if $f_k \geq h_k$ for all $k\in \{1,\ldots,K\}$ and there exists $k\in \{1,\ldots,K\}$ for which $f_k > h_k$.
\end{definition}

With this, the domination relation between sets is defined as follows.

\begin{definition}{\textbf{Domination relation between sets}}

Let $F$ and $H$ be sets of vectors from the objective space.
Set $F$ \textit{dominates} set $H$ in a maximization problem, $F \overset{\text{max}}\succ H$, if $\forall h \in H, \exists f \in F$, such that $f \overset{\text{max}}\succ h$.
\end{definition}

Analogous definitions to the above are used for domination relations $(\overset{\text{min}}\succ)$ in minimization problems.

In the context of  MOGs, the notion of 'worst case domination' emerges in addition to the simple domination relation between sets, because, while assessing the payoff of a particular strategy for $P_1$ (or $P_2$) in a competing game, the set of optimal strategies for the opponent must be taken into account. Accordingly, we define the \textit{worst-case domination} relation. 

\begin{definition}{\textbf{Worst-case domination}}

Set $F$ \textit{worst-case dominates} set $H$ ($F \overset{\text{w.c.}}\succ H$) in a maximization problem when $H \overset{\text{min}}\succ F$, and set $F$ worst-case dominates set $H$ in a minimization problem when $H \overset{\text{max}}\succ F$.
\end{definition}

\subsection{Rationalizable strategies}

Given the worst-case domination relation, Pareto-optimality in a MOG is defined as follows.

\begin{definition}{\textbf{Pareto-optimality in MOGs}}

A set $Z^*$ is \textit{Pareto-optimal} in a MOG if no other set exists that worst-case dominates $Z^*$. The set of all worst case non-dominated sets constitutes the Pareto set of sets $P^*$ (also referred to as the \textit{Pareto Layer}~\cite{Avigad2010}):
$$P^*=\{F: \neg \exists H s.t. H \overset{\text{w.c.}}\succ F \}$$
\end{definition}

To elaborate from the point of view of the players in the MOG, when evaluating the $i$-th strategy of player $P_1$ ($\boldsymbol{s_1^i}$), there are $J$ possible strategies $[\boldsymbol{s_2^1}, \boldsymbol{s_2^2}, \ldots \boldsymbol{s_2^J}]$ of $P_2$ to consider. If the objective preferences of $P_2$ are not defined, then there is a set of non-dominated payoff vectors, which are in fact all possible best responses of $P_2$ to the strategy $\boldsymbol{s_1^i}$. This set is named the \textit{anti-optimal front} ($F_{s^i_1}^{-*}$) corresponding to the $i$-th strategy of $P_1$.

\begin{definition}{\textbf{Irrational strategies}}

A set of \textit{irrational strategies} ($S_1^{irr}$) of player $P_1$ is defined as follows:
$$S_1^{irr} = \{\boldsymbol{s_1^i} \in S_1\ |\ \exists \boldsymbol{s_1^{i'}} \in S_1\ F_{s^{i'}_1}^{-*} \overset{\text{w.c.}}\succ F_{s^i_1}^{-*}\ \forall i \in \{1,\ldots,I\}\}$$
\end{definition}

All strategies which are not irrational are called rationalizable.

\begin{definition}{\textbf{Rationalizable strategies}}

A set of \textit{rationalizable strategies} of player $P_1$ is defined as $S_1^R=S_1-S_1^{irr}$.
\end{definition}

A detailed discussion with examples concerning domination relations and preferable outcomes can be found in the Appendix of~\cite{Eisenstadt2016}. The main conclusion stemming from that discussion is that, for a particular player in a MOG, if the anti-optimal front corresponding to strategy $\boldsymbol{s^i}$ worst-case dominates the anti-optimal front corresponding to $\boldsymbol{s^j}$, then $\boldsymbol{s^i}$ always produces a preferable outcome for that player. On the other hand, when two anti-optimal fronts are worst-case non-dominated, one of the strategies could be a better choice for a certain objective preference articulation, while the other strategy may be better under some other preferences. Therefore, in the latter case, their direct comparison is not possible.

In this paper, the described worst-case domination approach is used as the basic tool to solve MOGs. All considered MOGs are assumed to have postponed preferences, and candidate strategies are accordingly analyzed from both players' perspectives in the proposed co-evolutionary algorithms (described next).


\section{The Canonical Co-evolutionary Algorithm for MOGs}
\label{sec:canonicalAlgorithm}

Our implementation of the canonical co-evolutionary algorithm for MOGs (Canonical CoEvoMOG) is designed based on~\cite{Eisenstadt2015}. Each subpopulation in the algorithm caters to a unique player in the MOG. The key difference between our implementation and that of~\cite{Eisenstadt2015} is that, instead of considering the entire anti-optimal set of vectors while determining worst-case domination relations, we only consider the ideal point of the anti-optimal set as a single representative vector. Note that the term "ideal" is used from the point of view of the opponent. Thus, without loss of generality, if the anti-optimal front corresponding to strategy $\boldsymbol{s^i_1}$ of the minimization player $P_1$ is $F_{s^i_1}^{-*}$, then the ideal point is defined by the maximum (extreme) individual objective values that occur in $F_{s^i_1}^{-*}$. An illustration is depicted in Figure~\ref{fig:idealpoint}, where the set of white circles (representing the anti-optimal front corresponding to $\boldsymbol{s_1^i}$) worst-case dominate the set of white squares (representing the anti-optimal front corresponding to $\boldsymbol{s_1^j}$) in minimization sense. The maximizing opponent's ideal points, given $\boldsymbol{s_1^i}$ or $\boldsymbol{s_1^j}$, are also shown in the figure in black. From Figure~\ref{fig:idealpoint}, we observe that while ascertaining the expected payoff of a particular strategy, the ideal point of the anti-optimal front can be seen as a meaningful representative encompassing all possible moves of the opponent. It is worth mentioning that as the ideal point is composed of the extreme values of all objectives, no specific objective preference is assumed for the opponent. In that sense, the proposed simplification can be seen as preserving the postponed preference articulation condition of the MOG.

\begin{figure}
	\begin{center}
    \includegraphics[width=0.35\columnwidth]{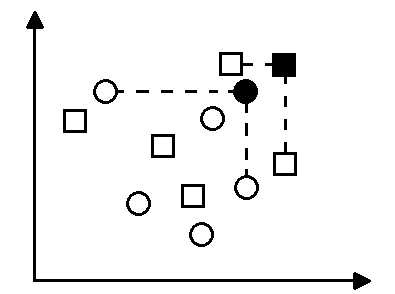}
    \caption{Example of payoff vectors in two-dimensional objective space. The white circles representing the anti-optimal front corresponding to $\boldsymbol{s_1^i}$ worst-case dominate the white squares representing the anti-optimal front corresponding to $\boldsymbol{s_1^j}$ in minimization sense. Black figures represent the opponent's ideal points for the respective sets.}
    \label{fig:idealpoint}
	\end{center}
\end{figure}

We emphasize the rationale behind using only a single representative vector (instead of the entire anti-optimal set) following the observation presented in~\cite{Avigad2013}, which can also be stated through the theorem below.

\begin{theorem}
If strategy $\boldsymbol{s^i}$ worst-case dominates $\boldsymbol{s^j}$, then the ideal point of the anti-optimal front of the former either dominates or is at least equal to the ideal point of the anti-optimal front of the latter strategy.
\end{theorem}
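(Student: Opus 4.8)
The plan is to unwind the definition of worst-case domination and of the ideal point, and to reduce the statement to a coordinate-wise comparison of maxima. I would argue the case of the minimizing player $P_1$ explicitly; the argument for the maximizing player $P_2$ is entirely symmetric (with the roles of $\overset{\text{max}}\succ$ and $\overset{\text{min}}\succ$, and of coordinate-wise maxima and minima, interchanged). Write $A = F_{s^i}^{-*}$ and $B = F_{s^j}^{-*}$ for the two anti-optimal fronts, and for each objective index $k$ let $z^i_k = \max_{f\in A} f_k$ and $z^j_k = \max_{g\in B} g_k$ be the corresponding coordinates of the two ideal points (the maxima are attained since the strategy sets, and hence the anti-optimal fronts, are finite; otherwise one replaces $\max$ by $\sup$).

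First I would translate the hypothesis. By definition of worst-case domination for a minimization player, ``$\boldsymbol{s^i}$ worst-case dominates $\boldsymbol{s^j}$'' is precisely $B \overset{\text{max}}\succ A$. Expanding the definition of set domination in the maximization sense, this says: for every $f\in A$ there exists $g\in B$ with $g \overset{\text{max}}\succ f$; in particular $g_k \ge f_k$ for all $k$. Now fix $k$ and choose $f^\star\in A$ with $f^\star_k = z^i_k$. Applying the hypothesis to $f^\star$ gives some $g\in B$ with $g_k \ge f^\star_k = z^i_k$, hence $z^j_k = \max_{g'\in B} g'_k \ge g_k \ge z^i_k$. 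As $k$ was arbitrary, $z^i_k \le z^j_k$ for every objective, i.e. the ideal point of $A$ is component-wise no larger than that of $B$. This is exactly the claimed conclusion: if the inequality is strict for some $k$ then the ideal point of $\boldsymbol{s^i}$'s anti-optimal front dominates that of $\boldsymbol{s^j}$ in the minimization sense, and otherwise the two ideal points coincide.

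The only point demanding care is bookkeeping the directions of the relations: worst-case domination for $P_1$ reverses into ordinary $\overset{\text{max}}\succ$-domination of the \emph{dominated} strategy's front over the \emph{dominating} strategy's front, and this must be combined consistently with the fact that a minimization player's ideal point is assembled from coordinate-wise \emph{maxima}. Once these orientations are fixed, no genuine obstacle remains; the heart of the argument is the single inequality chain $z^j_k \ge g_k \ge f^\star_k = z^i_k$ above.
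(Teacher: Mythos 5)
Your proof is correct, and it rests on the same core reduction as the paper's: translate ``$\boldsymbol{s^i}$ worst-case dominates $\boldsymbol{s^j}$'' (for the minimizer) into the set relation $F_{s^j}^{-*} \overset{\text{max}}\succ F_{s^i}^{-*}$, and then compare the coordinate-wise extremes of the two anti-optimal fronts. The only real difference is in execution: the paper argues by contradiction (``if the ideal point of $\boldsymbol{s^i}$'s front maximization dominated that of $\boldsymbol{s^j}$'s, we would have a contradiction''), whereas you prove the component-wise inequality $z^i_k \le z^j_k$ directly via the chain $z^j_k \ge g_k \ge f^\star_k = z^i_k$. Your direct version is, if anything, the tighter rendering: merely refuting ``ideal($\boldsymbol{s^i}$) max-dominates ideal($\boldsymbol{s^j}$)'' does not by itself exclude the two ideal points being mutually non-dominated, so the paper's concluding sentence implicitly relies on exactly the coordinate-wise bound you make explicit. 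Your attention to attainment of the maxima (finite fronts, or $\sup$ otherwise) is a minor but welcome extra care not present in the paper.
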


\begin{proof}
It follows from the definition of worst-case domination presented in the previous section. For brevity, we consider the strategies $\boldsymbol{s^i}$ and $\boldsymbol{s^j}$ of the minimization player. Similar arguments can be applied to the maximization player as well. Thus, the antecedent statement of the theorem implies that the anti-optimal front of $\boldsymbol{s^i}$ is maximization dominated by the anti-optimal front of $\boldsymbol{s^j}$. Accordingly, there exist vector(s) in the anti-optimal set of $\boldsymbol{s^j}$ that maximization dominate the extreme vectors of the anti-optimal front of $\boldsymbol{s^i}$. As a result, if we assume that the opponent's ideal point, given $\boldsymbol{s^i}$, maximization dominates the opponent's ideal point given $\boldsymbol{s^j}$, then we have a contradiction. In other words, the opponent's ideal point, given $\boldsymbol{s^i}$, must minimization dominate or be equal to the opponent's ideal point given $\boldsymbol{s^j}$.
\end{proof}

From an algorithmic point of view, the first advantage of using the representative ideal point vector corresponding to a particular strategy is that it allows us to directly employ standard non-domination relations between vectors (as in Definition 1), bypassing the cumbersome process of comparing sets of vectors to determine worst-case domination relations. In other words, from minimization player $P_1$'s standpoint, strategy $\boldsymbol{s_1^i}$ is preferred over $\boldsymbol{s_1^j}$ simply if the ideal point of the anti-optimal front of $\boldsymbol{s_1^i}$ dominates that of $\boldsymbol{s_1^j}$ in the minimization sense. Furthermore, the reduction of a set of vectors to a single point implies that simple diversity measures (such as the crowding distance~\cite{Deb2001}) may also be directly incorporated to facilitate a good distribution of alternative payoff vectors in the objective space of the MOG. Based on these basic ingredients, Figure~\ref{alg:CanonicalCoEvoMOG_code} outlines the schematic workflow of the Canonical CoEvoMOG algorithm.

In the Canonical CoEvoMOG algorithm, there are two subpopulations catering to the two players in the MOG. The method proceeds as in standard co-evolution for SOGs, where interactions are considered between all candidate strategies in the two subpopulations (forming a complete bipartite evaluation framework). The outcomes of the interactions are used to "approximate" the ideal point of the anti-optimal front corresponding to every candidate strategy of both players. The approximated ideal point vectors are then used to calculate non-domination ranks and crowding distances of strategies within each subpopulation separately, similarly to the case of evolutionary multi-objective optimization~\cite{Deb2001}. The non-domination ranks and the crowding distances are considered lexicographically for selecting the most promising candidate strategies in each subpopulation that progress the search to the next generation through genetic operations of crossover and mutation.

\begin{figure}[t]
\begin{small}
  \begin{algorithmic}
    \STATE{Initialize randomly population $S_1$ for player $P_1$ and population $S_2$ for player $P_2$.}
    
    \FORALL{generations}
        \STATE{\textbf{Step 1}: Create offspring populations $S'_1$ and $S'_2$ via crossover and mutation of parent individuals from $S_1$ and $S_2$, respectively.}
        
        \STATE{\textbf{Step 2}: Set $S_1''$ as $S_1 \cup S_1'$, and set $S_2''$ as $S_2 \cup S_2'$.}

        \STATE{\textbf{Step 3}: Evaluate populations $S_1''$ and $S_2''$ by performing all interactions between candidate strategies in $S_1''$ and $S_2''$ (keep track of evaluations to prevent repetitions).}
        
        \STATE{\textbf{Step 4}: Obtain the "approximate" ideal point corresponding to each candidate strategy in $S_1''$ and $S_2''$ based on the outcomes of all possible interactions.}
        
        \STATE{\textbf{Step 5}: Obtain non-domination rank and crowding distance of each strategy in $S_1''$ and $S_2''$ based on the approximated ideal point vectors.}
        
        \STATE{\textbf{Step 6}: Consider the non-domination ranks and crowding distances lexicographically to select the most promising candidate strategies from $S_1''$ and $S_2''$ to form $S_1$ and $S_2$ in the next generation.}

    \ENDFOR
\end{algorithmic}
\end{small}
\caption{Pseudo-code of the Canonical Co-evolutionary MOG Algorithm.}
\label{alg:CanonicalCoEvoMOG_code}
\end{figure}

\section{The Memetic Co-evolutionary Algorithm for MOGs}
\label{sec:memeticAlgorithm}

One of the drawbacks faced by the Canonical CoEvoMOG algorithm is the possible emergence of the Red Queen effect. This suggests that the convergence to the desired equilibrium strategies is impeded due to the continuously adapting subpopulations that endlessly try to keep pace with the changes in the opponent's strategies. Notably, the slowdown is unlikely to be related to the underlying MOG, but is often an artefact of the co-evolutionary method itself. It is regarded that in MOG applications of a time-sensitive nature that may even involve computationally expensive evaluations, such a slowdown is not affordable. Therefore, in this section, we propose memetic local strategy refinements as a way of enhancing the purely genetic mechanisms of the Canonical CoEvoMOG algorithm, thereby potentially speeding up the convergence characteristics. Further, in order to maintain the computational feasibility of the method, a surrogate modeling of the representative payoff vector is proposed, which allows the local refinements to be carried out efficiently on the surrogate landscape. Our proposal is labeled as a Memetic CoEvoMOG algorithm, and involves a simple but important modification to the pseudo-code in Figure~\ref{alg:CanonicalCoEvoMOG_code}. Details of the modified procedure are presented in Figure~\ref{alg:MemeticCoEvoMOG_code}.

\begin{figure}[t]
\begin{small}
  \begin{algorithmic}
    \STATE{Randomly generate initial population $S_1$ for player $P_1$ and $S_2$ for $P_2$.}

    \STATE{Evaluate strategies in $S_1$ and $S_2$ considering all interactions possible.}

    \STATE{Train FNNs mapping candidate strategies to the corresponding ideal point approximations.}
    
    \FORALL{generations}
        \STATE{\textbf{Step 1}: Create offspring populations $S_1'$ and $S_2'$ via crossover and mutation of parent individuals from $S_1$ and $S_2$, respectively.}

        \STATE{\textbf{Step 2}: Apply local refinements using the surrogate landscape on a subset of randomly chosen individuals from $S_1'$ and $S_2'$ (see Figure~\ref{alg:MemeticCoEvoMOG_code2} for details).}
        
        \STATE{\textbf{Step 3}: Set $S_1''$ as $S_1 \cup S_1'$, and set $S_2''$ as $S_2 \cup S_2'$.}
        
        \STATE{\textbf{Step 4}: Evaluate populations $S_1''$ and $S_2''$ by performing all interactions between candidate strategies in $S_1''$ and $S_2''$ (keep track of evaluations to prevent repetitions).}
        
        \STATE{\textbf{Step 5}: Obtain the "approximate" ideal point corresponding to each candidate strategy in $S_1''$ and $S_2''$ based on the outcomes of all possible interactions.}
        
        \STATE{\textbf{Step 6}: Obtain non-domination rank and crowding distance of each strategy in $S_1''$ and $S_2''$ based on the approximated ideal point vectors.}
        
        \STATE{\textbf{Step 7}: Consider the non-domination ranks and crowding distances lexicographically to select the most promising candidate strategies from $S_1''$ and $S_2''$ to form $S_1$ and $S_2$ in the next generation.}
        
        \STATE{\textbf{Step 8}: Retrain FNNs based on $S_1$, $S_2$ and the corresponding ideal point approximations.}
    \ENDFOR
\end{algorithmic}
\end{small}
\caption{Pseudo-code of the Memetic Co-evolutionary MOG Algorithm.}
\label{alg:MemeticCoEvoMOG_code}
\end{figure}

\begin{figure}[t]
\begin{small}
  \begin{algorithmic}
    \STATE{Let probability of local search be $p_{ls}$.}
    
    \FORALL{individual in $S_1$}
        \STATE{\textbf{Step 1}: Select the individual with probability $p_{ls}$. If not selected, then continue to next individual.}

        \STATE{\textbf{Step 2}: Generate a random weight vector satisfying partition of unity.}
        
        \STATE{\textbf{Step 3}: Combine the FNN surrogates using the random weights for scalarization. Minimize the resultant objective via the Nelder-Mead simplex algorithm where the individual's strategy is taken as the starting point for local search.}
        
        \STATE{\textbf{Step 4}: Update the individual with the best solution found after the local search procedure.}
    \ENDFOR
\end{algorithmic}
\end{small}
\caption{Pseudo-code of Memetics via Local Refinement. Steps are shown herein from the perspective of the minimization player $P_1$. The procedure is trivially generalized to the case of the maximization player $P_2$ as well.}
\label{alg:MemeticCoEvoMOG_code2}
\end{figure}

\subsection{Overview of Surrogate Modeling in MOGs}

A surrogate model is essentially a computationally cheap approximation of the underlying (expensive) function to be evaluated. By searching on the surrogate landscape instead of the original function, significant savings in computational effort can be achieved~\cite{Jin2011,Loshchilov2013}. However, before building the surrogate model, the function(s) to be approximated must first be ascertained. For a MOG, this task is in general unclear, as corresponding to a particular strategy, a set of optimal opponent strategies usually exist that constitute the anti-optimal front. 

It is at this juncture that the second, and perhaps most relevant, implication of using the representative ideal point vector (instead of the entire anti-optimal front) is revealed. Without the proposed modification, it is difficult to imagine an approach for incorporating memetics into the canonical co-evolutionary algorithm for MOGs. To elaborate, while creating a surrogate of an entire set of vectors is indeed prohibitive, surrogate models that map an individual strategy to its corresponding ideal point vector (of the anti-optimal set) can presumably be learned with relative ease. 

In the Memetic CoEvoMOG algorithm, the data generated for $S_1$ and $S_2$ at Step 6 of Figure~\ref{alg:CanonicalCoEvoMOG_code} is used for iterative surrogate modeling. Candidate strategies in $S_1$ and $S_2$ serve as the inputs to the surrogate model, while the corresponding approximate ideal point objectives serve as outputs of interest. Note that separate surrogate models are learned for each  player. Further, the models are learned repeatedly at every generation of the Memetic CoEvoMOG algorithm based on only the data generated during that generation (i.e., data is not accumulated across generations). The rationale behind this step is that the approximated ideal point vector tends to continuously adapt in a co-evolutionary algorithm in conjunction with the evolving strategies of the opponent, such that there may be little correlation in the data across generations. Finally, it must be mentioned that a simple feedforward neural network (FNN) is used for surrogate modeling in this paper, although any other preferred model type may also be incorporated with minimal change to the overall algorithmic framework.

\subsection{Memetics via Local Refinement}

Memetics in stochastic optimization algorithms (such as EAs) are generally realized via a local solution refinement step as a form of lifelong learning of individuals. Since the original functions are assumed to be computationally expensive, herein, the local refinements are carried out in the surrogate landscape. Notably, since the ideal point is itself vector-valued, the local search is performed by first reducing the vector to a scalar value via a simple random weighting of objectives; we ensure that the randomly generated weights satisfy the partition of unity condition. It is important to mention here that the random weights are sampled from a uniform probability density function, such that no biased preference information is imposed (which preserves the postponed preference articulation condition of the MOG). The local search method used in this study is the popular derivative-free (bounded) Nelder-Mead simplex algorithm, although any other algorithm may also be used. Thus, for the minimization player $P_1$, the simplex algorithm locally minimizes the randomly scalarized objectives, while for the maximization player $P_2$, the simplex algorithm locally maximizes the scalarized objectives. 

After offspring creation through genetic operations, a subset of them from both subpopulations of the Memetic CoEvoMOG algorithm are randomly selected with some user defined probability for local search. Once the local refinement is completed, i.e., the Nelder-Mead simplex algorithm converges to a point within the specified search space bounds, the improved solution (or strategy) is directly injected into the offspring population in the spirit of Lamarckian learning~\cite{Ong2004}. A brief overview of the steps involved in the memetic local refinement procedure is presented in Figure~\ref{alg:MemeticCoEvoMOG_code2}.


\section{Numerical Experiments}
\label{sec:experiments}

The proposed method was tested on a simple differential MOG named \textit{tug-of-war}. The basic form of the game consists of a point with mass $m$ placed at coordinates $(0,0)$. Two players $P_1$ and $P_2$ choose angles $\theta_1$ and $\theta_2$, respectively, at which the respective forces with magnitudes $F_1,F_2$ are applied (see Figure~\ref{fig:tugofwar}). The game outcome is the position (given by coordinates $(x_1,x_2)$) of the mass $m$ after particular time $t_f$. The objectives of player $P_1$ are to minimize $x_1$ and $x_2$ and the objectives of player $P_2$ are to maximize these two coordinates.
The final coordinates can be computed with formulas:
$x_1 = (F_1 cos(\theta_1)+F_2 cos(\theta_2))\cdot\frac{1}{2}t^2_f$, $x_2 = (F_1 sin(\theta_1)+F_2 sin(\theta_2))\cdot\frac{1}{2}t^2_f$. For simplification, the following assumptions are made: $F_1=F_2=1$, $t_f=\sqrt{2}$, and thus $x_1=cos(\theta_1)+cos(\theta_2)$ and $x_2=sin(\theta_1)+sin(\theta_2)$.
In this game, the strategic decision is to choose the angles $\theta_1$ and $\theta_2$, so the space of possible strategies is infinite, since $\theta_1,\theta_2 \in [0;2\pi]$. Accordingly, observe that the continuous search space of $\theta_1$ corresponds to the set $S_1$, and that of $\theta_2$ corresponds to the set $S_2$.

\begin{figure}
	\begin{center}
    \includegraphics[width=0.4\columnwidth]{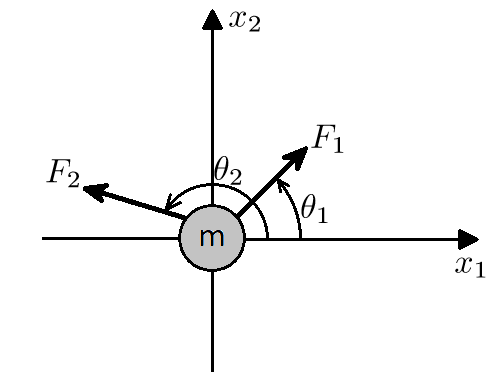}
    \caption{The tug-of-war game setting.}
    \label{fig:tugofwar}
	\end{center}
\end{figure}

Rationalizable strategies for both players can be intuitively ascertained. Player $P_1$ (minimizer) aims at having the mass in a position with negative coordinates (third quadrant), and therefore rationalizable strategies are in the range $\pi \leq \theta_1 \leq \frac{3}{2} \pi$. Similarly, player $P_2$ (maximizer) wants the mass to be located in the first quadrant, thus the rationalizable strategies are in the range $0 \leq \theta_2 \leq \frac{1}{2}\pi$. Due to postponed preference articulation, any move in the above ranges is a valid selection. Figure~\ref{fig:optimal_solution} shows all possible end game positions in the case of optimal performance, i.e. when both players select from the rationalizable range of strategies mentioned above.

\begin{figure}
	\begin{center}
    \includegraphics[width=0.5\columnwidth]{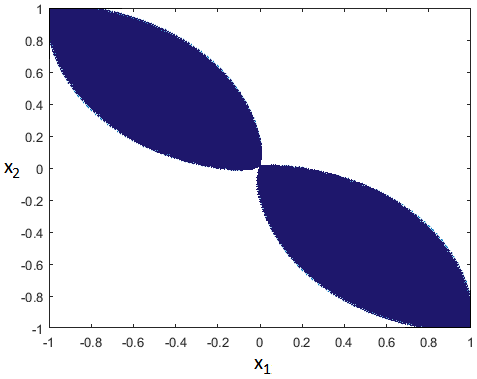}
    \caption{Representative set of all points in the objective space that reflect the rationalizable strategies of the tug-of-war MOG.}
    \label{fig:optimal_solution}
	\end{center}
\end{figure}

Due to the a priori known optimal performances of this MOG, the results of the algorithms applied to the tug-of-war game can be easily compared based on their approximation quality. Nevertheless, doing so is usually not possible in arbitrary MOGs where exact results are often too hard to compute in general practical settings. A more detailed description of the \textit{tug-of-war} game can be found in~\cite{Eisenstadt2015} where it was first used to test a version of the Canonical Co-evolutionary Algorithm similar to the one described in Section~\ref{sec:canonicalAlgorithm}.

\subsection{Experimental setup}

To make the tug-of-war MOG even more complex for the purpose of rigorous experimental testing, several synthetic functions have been artificially incorporated into the game formulation to create a number of alternate benchmarks. To elaborate, we define:
$$x_1 = \frac{F_1}{1+\phi(\boldsymbol{z_1})}cos(\theta_1)+\frac{F_2}{1+\phi(\boldsymbol{z_2})}cos(\theta_2), x_2 = \frac{F_1}{1+\phi(\boldsymbol{z_1})}sin(\theta_1)+\frac{F_2}{1+\phi(\boldsymbol{z_2})}sin(\theta_2),$$
where $\phi$ is the incorporated function, $z_1$ and $z_2$ are additional decision parameters introduced for $P_1$ and $P_2$, respectively, while $F_1,F_2 \in [0,1]$ are force magnitudes (which are now also treated as decision parameters).

In this way, several tug-of-war variants can be created. Tested functions were chosen to check the efficacy of the proposed methods under varying conditions. Note that the selected functions are widely used in the literature to evaluate global optimization methods, including evolutionary techniques. In particular, the following $9$ widely-known optimization functions were considered to serve as $\phi$: Rosenbrock 2D, Rosenbrock 3D, Rastrigin 1D, Rastrigin 2D, Rastrigin 3D, Griewank 1D, Griewank 2D, Griewank 3D and Ackley 2D. Their plots and detailed description of properties can be found in~\cite{Molga2013}. It is worth mentioning that as all the selected functions have minimum value $0$, the representation of rationalizable strategies of all MOG variants is the same as shown in Figure~\ref{fig:optimal_solution}.

\begin{enumerate}
    \item Rosenbrock $n$D
    \begin{small}
    $$\phi_1(\boldsymbol{z}) = \sum_{i=1}^{n-1}[ 100(\boldsymbol{z}(i+1)-\boldsymbol{z}(i)^2)^2+(\boldsymbol{z}(i)-1)^2] $$
    \end{small}
    Global minimum: $\phi_1(1,\ldots,1)=0$.

    \item Rastrigin $n$D
    \begin{small}
    $$\phi_2(\boldsymbol{z}) = 10n + \sum_{i=1}^n [\boldsymbol{z}(i)^2 - 10cos(2\pi \boldsymbol{z}(i))] $$
    \end{small}
    Global minimum: $\phi_2(0,\ldots,0)=0$.

    \item Griewank $n$D
    \begin{small}
    $$\phi_3(\boldsymbol{z}) = \sum_{i=1}^n \frac{\boldsymbol{z}(i)^2}{4000} - \prod_{i=1}^n cos(\frac{\boldsymbol{z}(i)}{\sqrt{i}}) + 1 $$
    \end{small}
    Global minimum: $\phi_3(0,\ldots,0)=0$.

    \item Ackley $n$D
    \begin{small}
    $$\phi_4(\boldsymbol{z}) = -20 \cdot exp[-0.2 \sqrt{\frac{1}{n}\sum_{i=1}^n\boldsymbol{z}(i)^2}] - exp[\frac{1}{n}\sum_{i=1}^n cos(2\pi \boldsymbol{z}(i))] + 20 + e$$
    \end{small}
    Global minimum: $\phi_4(0,\ldots,0)=0$.

\end{enumerate}

In the experimental study, the canonical as well as the memetic co-evolutionay algorithms were run with the same hyperparameter settings in order to ensure that any performance differences are indeed a consequence of the proposed memetics module. The size of each subpopulation in the co-evolutionary algorithms was taken as $50$, and the methods were run for $100$ generations. For recombination operations, simulated binary crossover (SBX)~\cite{Agrawal1995} was used with distribution index of $20$, and mutations were applied using the polynomial mutation operator~\cite{Deb1999} also with distribution index $20$. In the Memetic CoEvoMOG algorithm, the probability of local search on the surrogate landscape ($p_{ls}$) was set to $20\%$ throughout. The test problems were assumed to be computationally expensive, such that the extra computational effort spent on building and searching the surrogate model was considered negligible in comparison to the cost of evaluations of the true underlying problem. For many real-world settings, such an assumption on the cost of surrogate assistance is reasonable, and is commonly used in most surrogate-assisted optimization studies. For this reason, the comparison plots presented in the next subsection are drawn on the basis of the results achieved over certain number of generations in the co-evolutionary algorithms, rather than explicitly taking computational time into account.

\begin{figure}[ht!]
\captionsetup[subfigure]{labelformat=empty}

\begin{subfigure}{0.5\textwidth}
\includegraphics[width=0.99\linewidth, height=4cm]{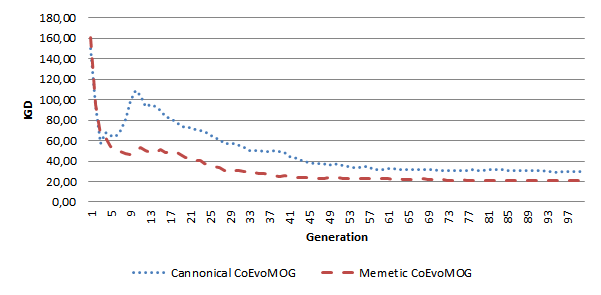}
\caption{Rosenbrock 2D}
\end{subfigure}
\begin{subfigure}{0.5\textwidth}
\includegraphics[width=0.99\linewidth, height=4cm]{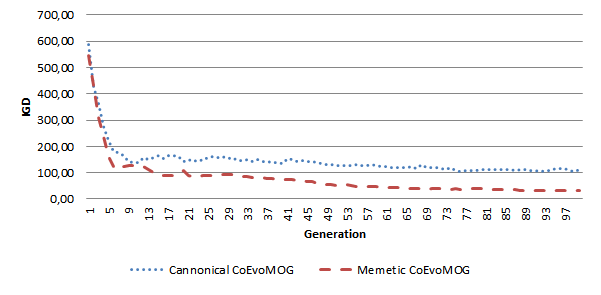}
\caption{Rosenbrock 3D}
\end{subfigure}

\begin{subfigure}{0.5\textwidth}
\includegraphics[width=0.99\linewidth, height=4cm]{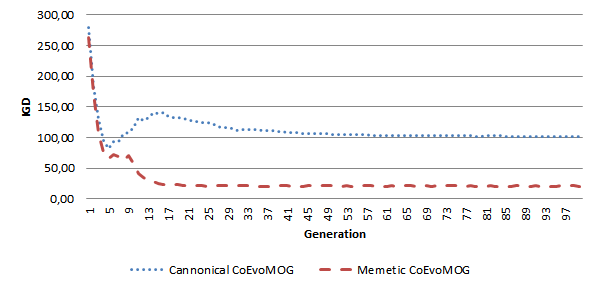}
\caption{Rastrigin 1D}
\end{subfigure}
\begin{subfigure}{0.5\textwidth}
\includegraphics[width=0.99\linewidth, height=4cm]{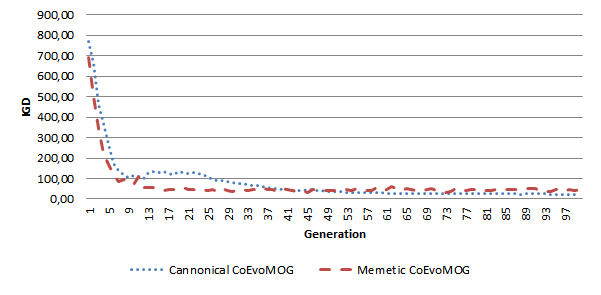}
\caption{Rastrigin 2D}
\end{subfigure}

\begin{subfigure}{0.5\textwidth}
\includegraphics[width=0.99\linewidth, height=4cm]{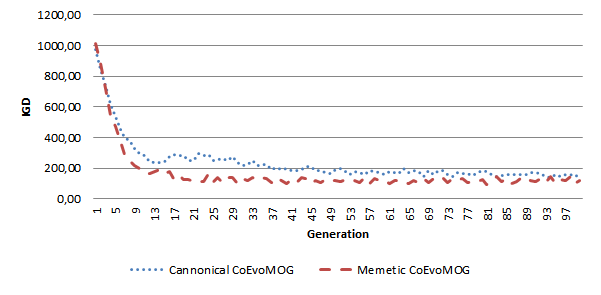}
\caption{Rastrigin 3D}
\end{subfigure}
\begin{subfigure}{0.5\textwidth}
\includegraphics[width=0.99\linewidth, height=4cm]{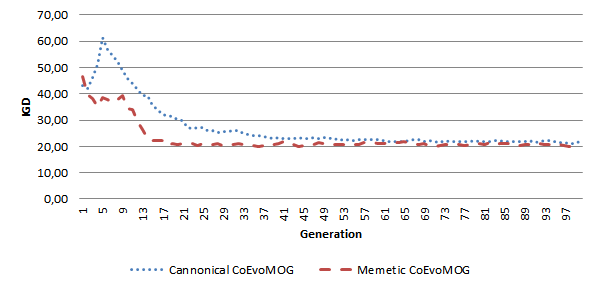}
\caption{Griewank 1D}
\end{subfigure}

\begin{subfigure}{0.5\textwidth}
\includegraphics[width=0.99\linewidth, height=4cm]{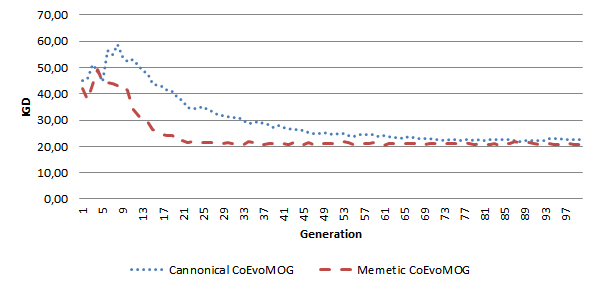}
\caption{Griewank 2D}
\end{subfigure}
\begin{subfigure}{0.5\textwidth}
\includegraphics[width=0.99\linewidth, height=4cm]{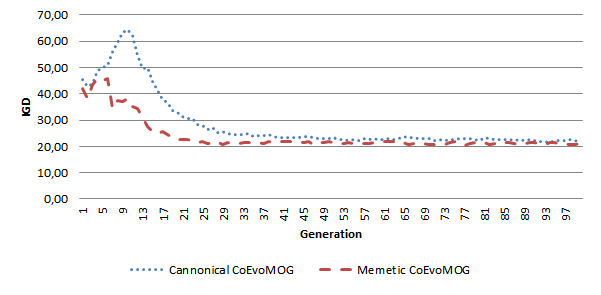}
\caption{Griewank3D}
\end{subfigure}

\caption{Convergence comparison between Canonical and Memetic Co-evolutionary Algorithms for tug-of-war MOG variants with different synthetic functions $\phi$.}
\label{fig:plots}
\end{figure}

\begin{figure}[ht!]
\captionsetup[subfigure]{labelformat=empty}

 \begin{center}
 \begin{subfigure}{0.5\textwidth}
\includegraphics[height=4cm]{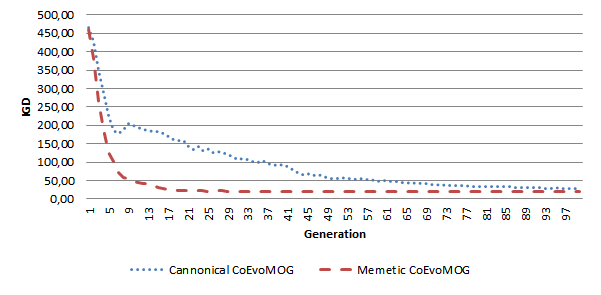}
\caption{Ackley 2D}
\end{subfigure}

\caption{Convergence comparison between Canonical and Memetic Co-evolutionary Algorithms for the tug-of-war MOG variant where $\phi$ = Ackley 2D.}
\label{fig:plots2}
\end{center}
\end{figure}

\subsection{Experimental Results}

The Inverse Generational Distance (IGD) metric was used to measure the convergence characteristics (performance) achieved by the algorithms. IGD combines information about convergence and diversity of the obtained solutions.
If $P^{*}$ is a set of uniformly distributed points constituting the Pareto layer, and $F$ is an approximation set of the Pareto layer obtained from the co-evolutionary algorithms, then
$$
IGD = \frac{\sum_{v\in P^{*}}d(v,F)}{|P^{*}|}
$$
where $d(v,F)$ denotes minimum Euclidean distance between $v$ and points in $F$, as measured in the objective space. Clearly, the lower the IGD values the better.

Figures~\ref{fig:plots} and~\ref{fig:plots2} present convergence comparison between Canonical CoEvoMOG and Memetic CoEvoMOG for $9$ tested functions. Plots show the IGD convergence trends averaged over $20$ independent runs. In all cases the Memetic CoEvoMOG algorithm's convergence (dashed line) is noticeably faster than Canonical CoEvoMOG (doted line).

\begin{figure}[ht!]
\captionsetup[subfigure]{labelformat=empty}

\begin{subfigure}{0.32\textwidth}
\includegraphics[width=0.99\linewidth, height=4.2cm]{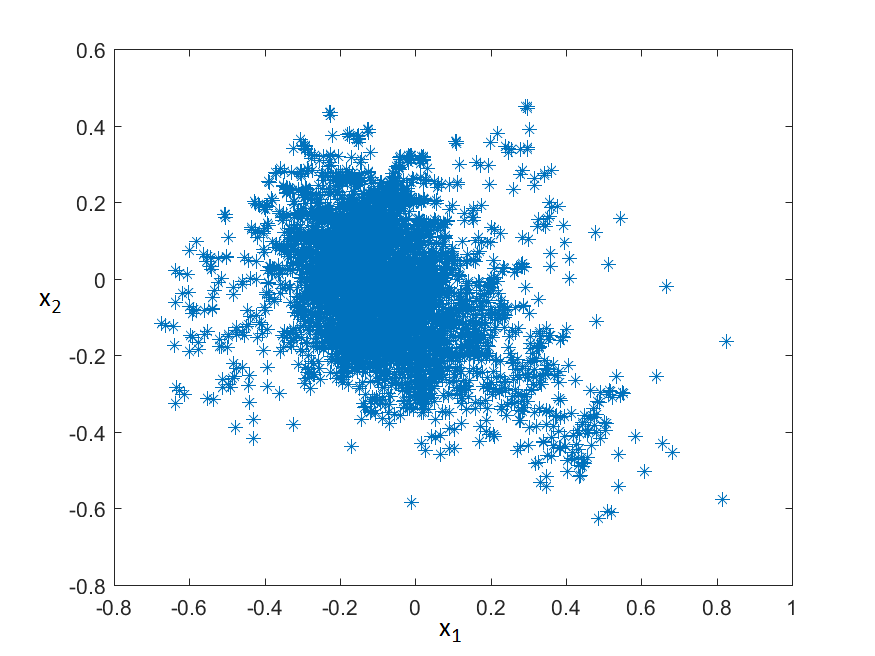}
\caption{Canonical gen. 1}
\end{subfigure}
\begin{subfigure}{0.32\textwidth}
\includegraphics[width=0.99\linewidth, height=4.2cm]{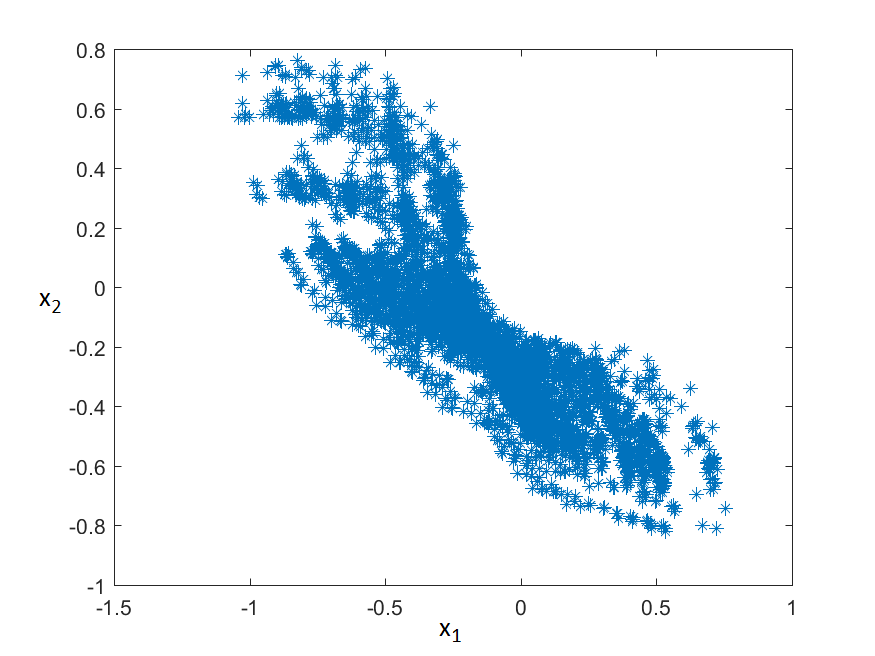}
\caption{Canonical gen. 10}
\end{subfigure}
\begin{subfigure}{0.32\textwidth}
\includegraphics[width=0.99\linewidth, height=4.2cm]{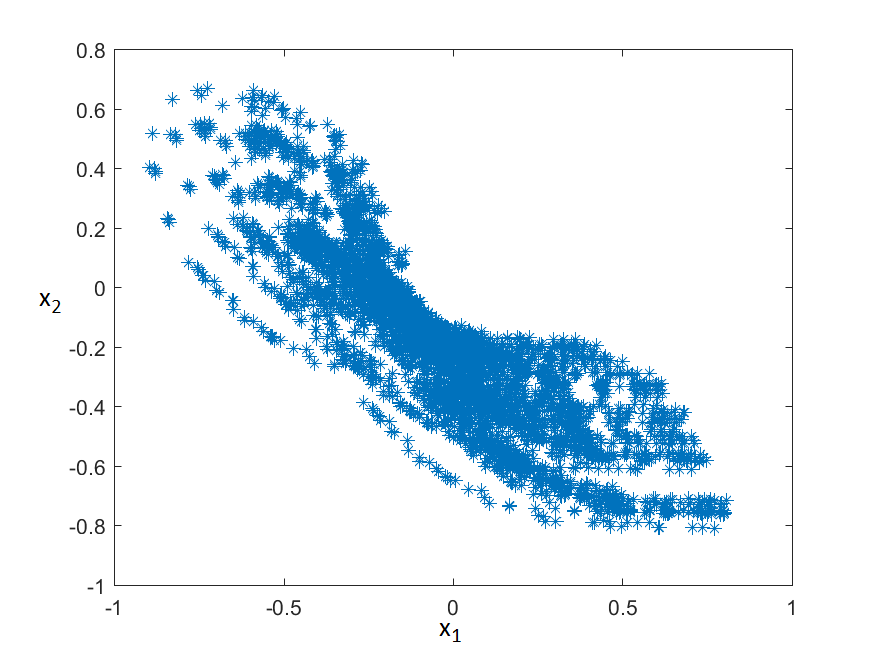}
\caption{Canonical gen. 15}
\end{subfigure}

\begin{subfigure}{0.32\textwidth}
\includegraphics[width=0.99\linewidth, height=4.2cm]{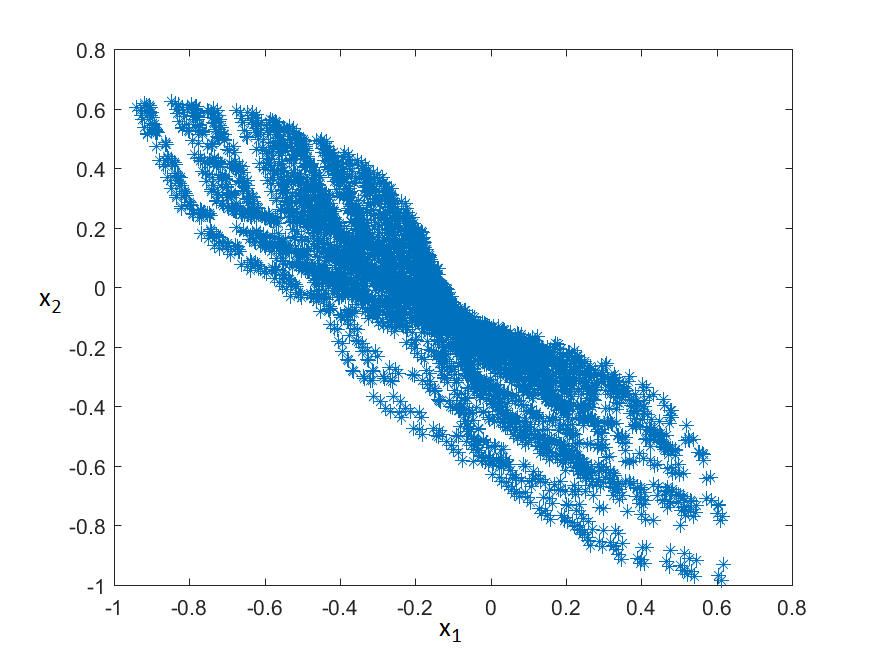}
\caption{Canonical gen. 20}
\end{subfigure}
\begin{subfigure}{0.32\textwidth}
\includegraphics[width=0.99\linewidth, height=4.2cm]{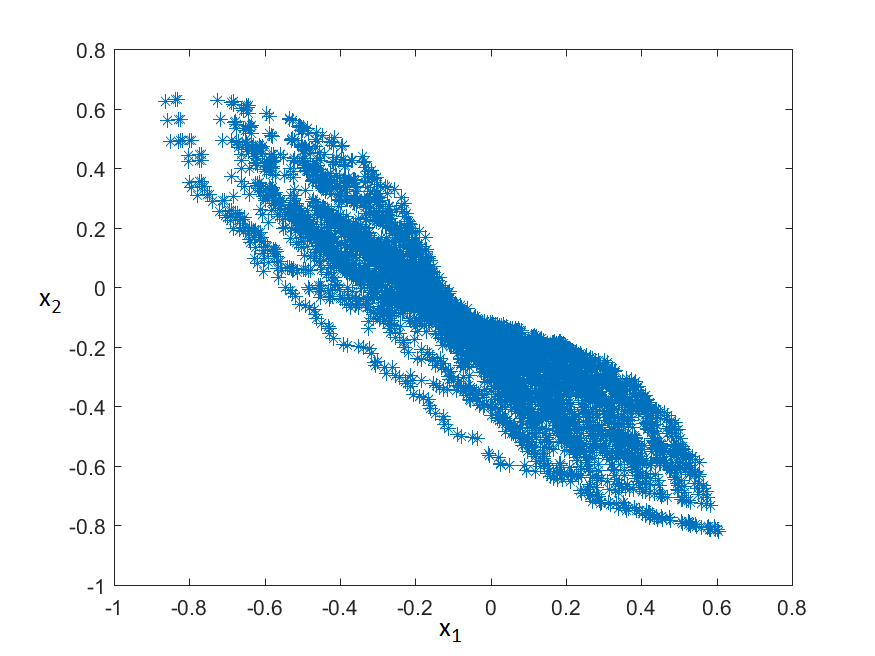}
\caption{Canonical gen. 25}
\end{subfigure}
\begin{subfigure}{0.32\textwidth}
\includegraphics[width=0.99\linewidth, height=4.2cm]{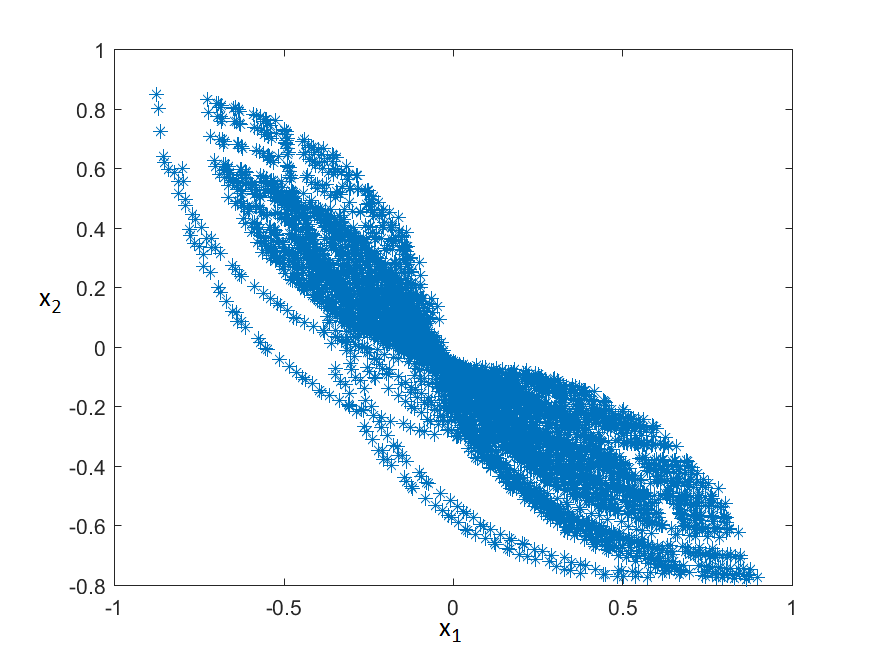}
\caption{Canonical gen. 50}
\end{subfigure}

\begin{subfigure}{0.32\textwidth}
\includegraphics[width=0.99\linewidth, height=4.2cm]{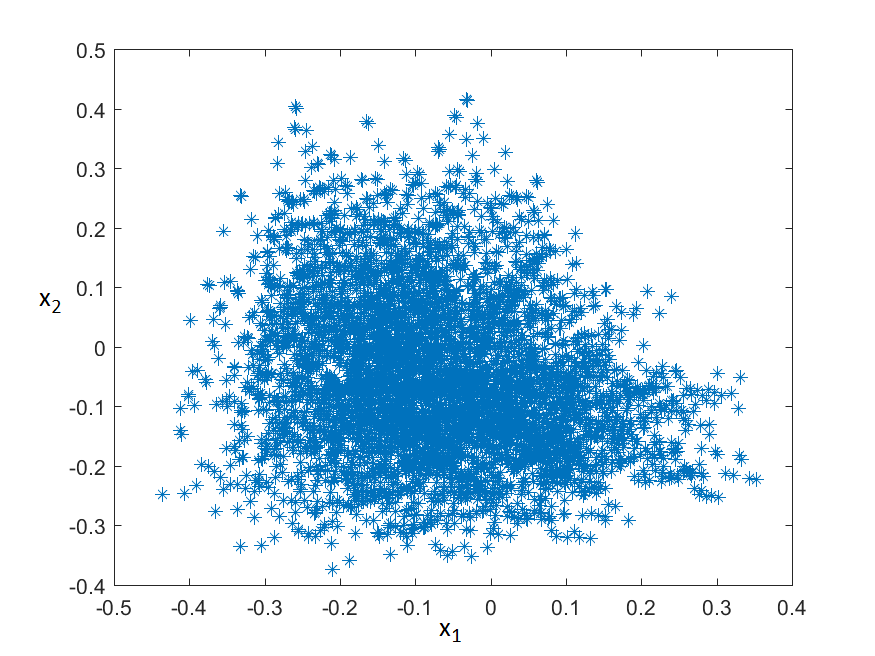}
\caption{Memetic gen. 1}
\end{subfigure}
\begin{subfigure}{0.32\textwidth}
\includegraphics[width=0.99\linewidth, height=4.2cm]{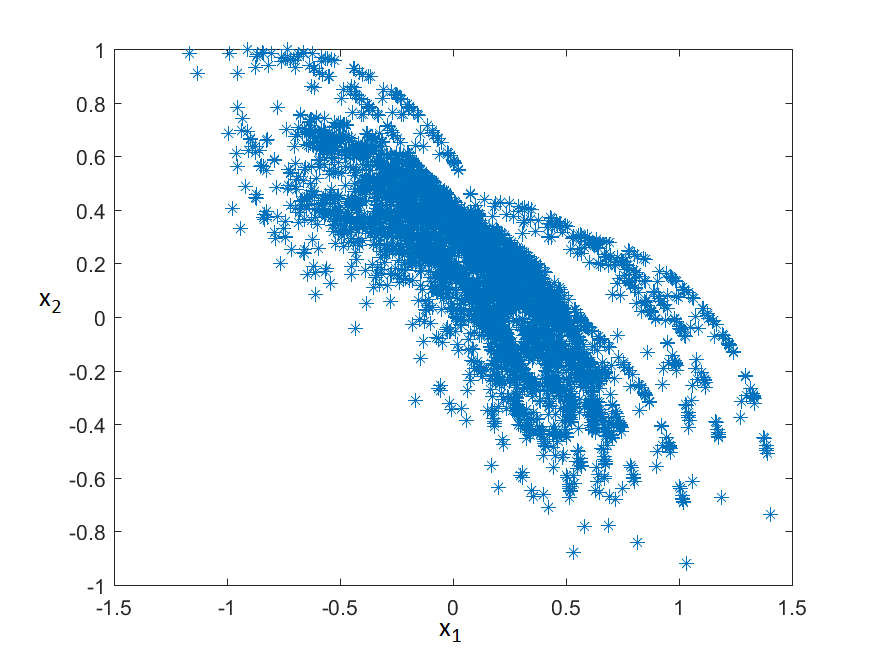}
\caption{Memetic gen. 10}
\end{subfigure}
\begin{subfigure}{0.32\textwidth}
\includegraphics[width=0.99\linewidth, height=4.2cm]{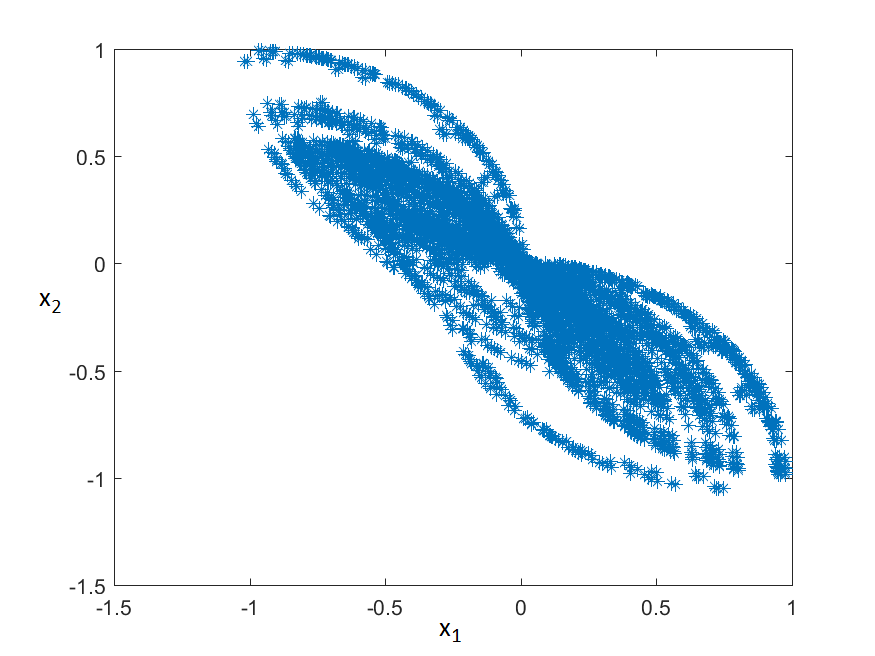}
\caption{Memetic gen. 15}
\end{subfigure}

\begin{subfigure}{0.32\textwidth}
\includegraphics[width=0.99\linewidth, height=4.2cm]{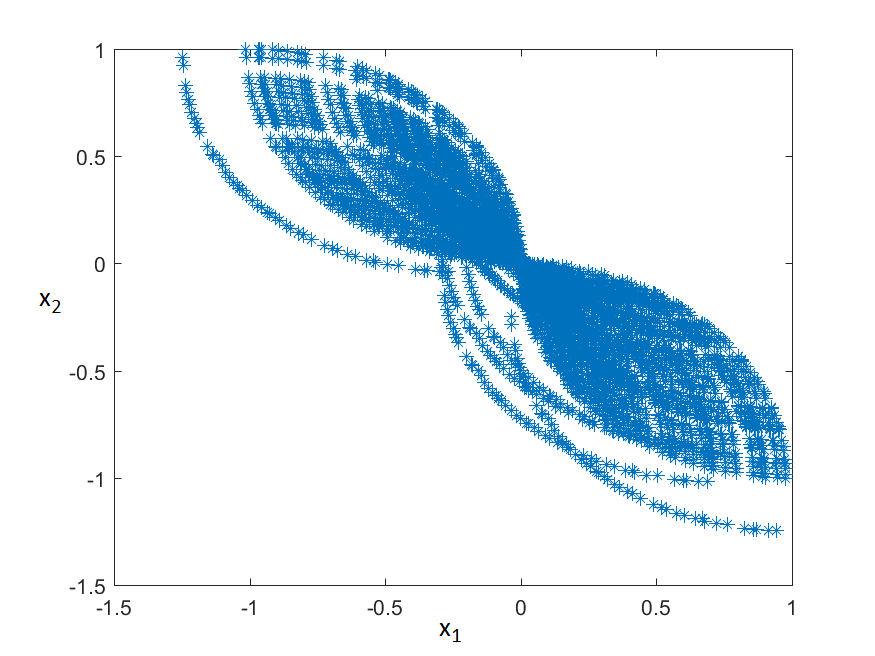}
\caption{Memetic gen. 20}
\end{subfigure}
\begin{subfigure}{0.32\textwidth}
\includegraphics[width=0.99\linewidth, height=4.2cm]{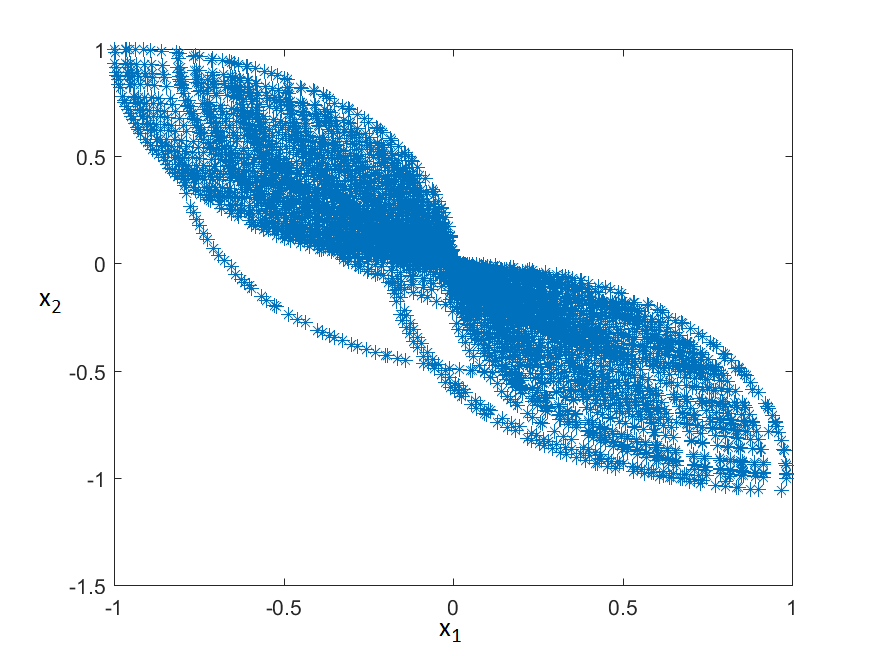}
\caption{Memetic gen. 25}
\end{subfigure}
\begin{subfigure}{0.32\textwidth}
\includegraphics[width=0.99\linewidth, height=4.2cm]{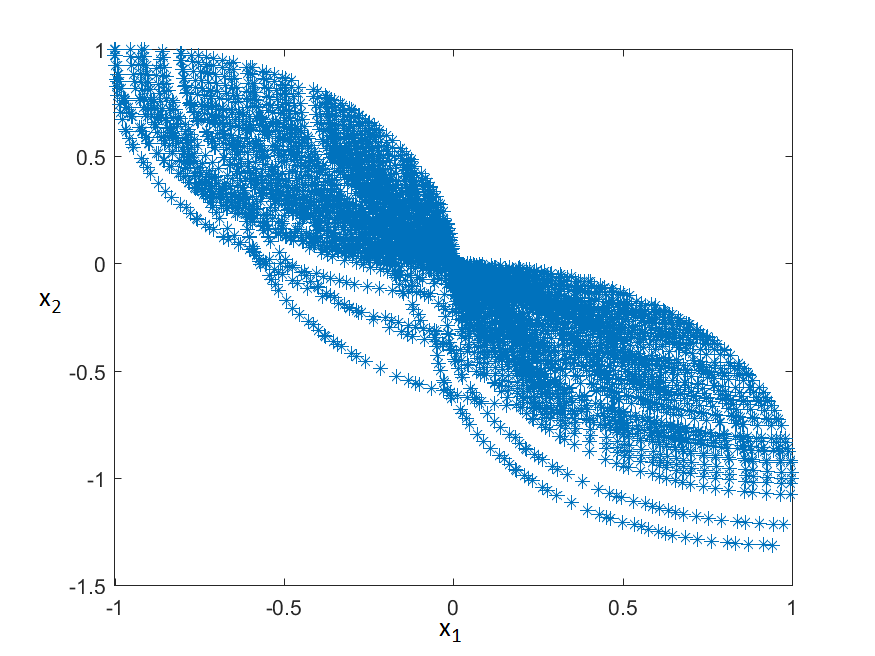}
\caption{Memetic gen. 50}
\end{subfigure}

\caption{The performance of Canonical CoEvoMOG and Memetic CoEvoMOG algorithms on the tug-of-war MOG variant with $\phi$ = Rastrigin 1D, after 1, 10, 15, 20, 25 and 50 generations.}
\label{fig:performanceAckley2D}
\end{figure}

\begin{figure}[ht!]
\captionsetup[subfigure]{labelformat=empty}

\begin{subfigure}{0.32\textwidth}
\includegraphics[width=0.99\linewidth, height=4.2cm]{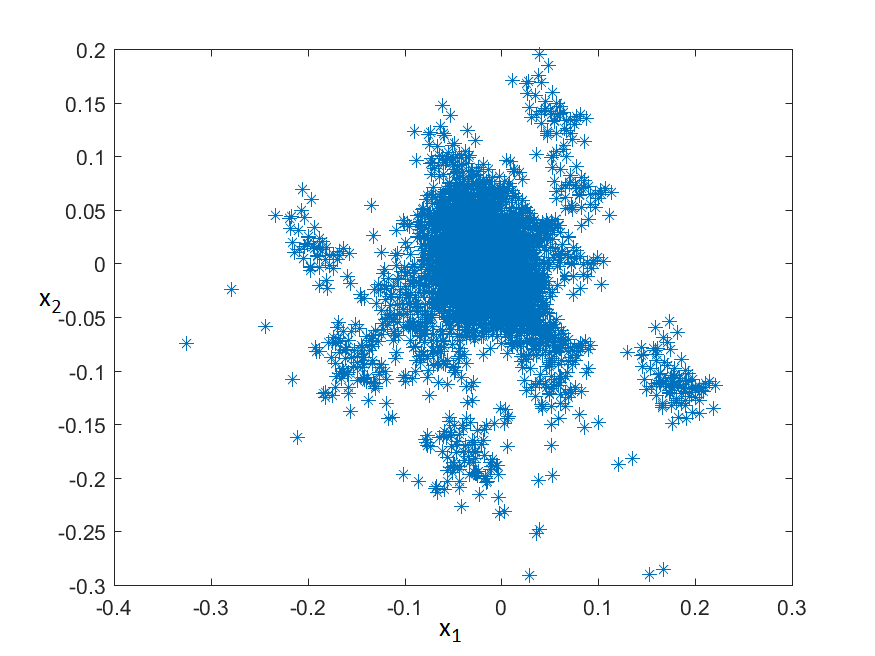}
\caption{Canonical gen. 1}
\end{subfigure}
\begin{subfigure}{0.32\textwidth}
\includegraphics[width=0.99\linewidth, height=4.2cm]{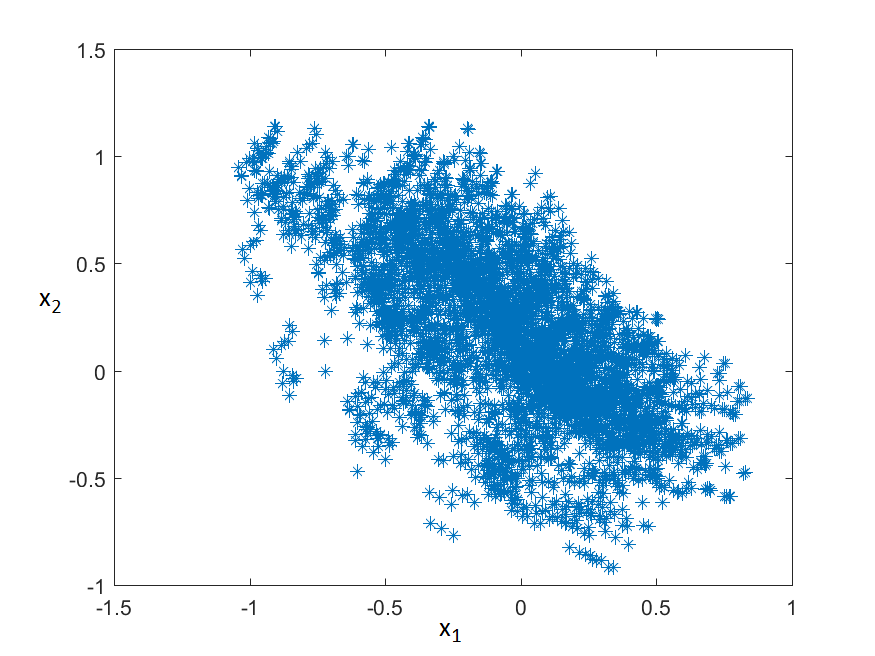}
\caption{Canonical gen. 10}
\end{subfigure}
\begin{subfigure}{0.32\textwidth}
\includegraphics[width=0.99\linewidth, height=4.2cm]{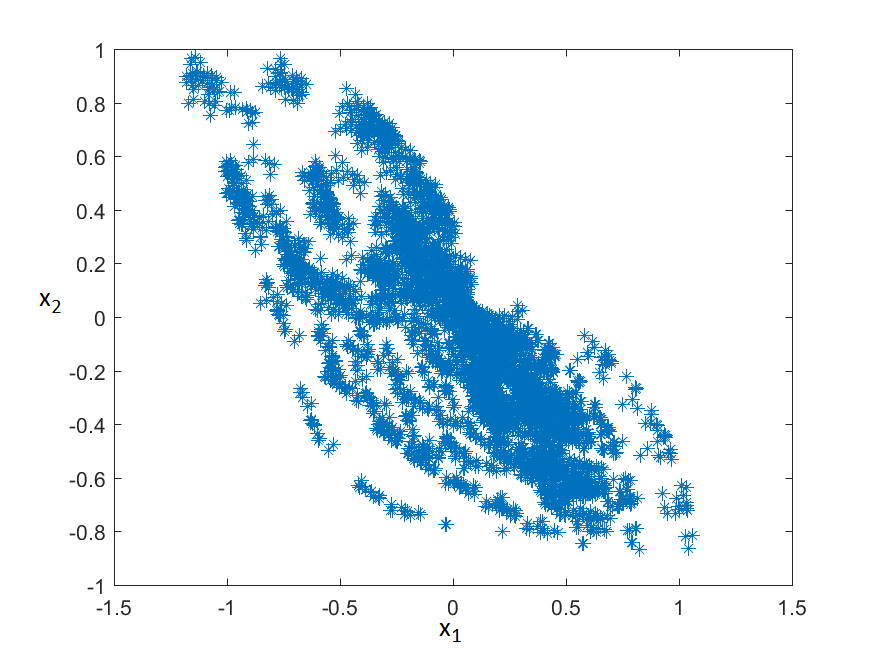}
\caption{Canonical gen. 15}
\end{subfigure}

\begin{subfigure}{0.32\textwidth}
\includegraphics[width=0.99\linewidth, height=4.2cm]{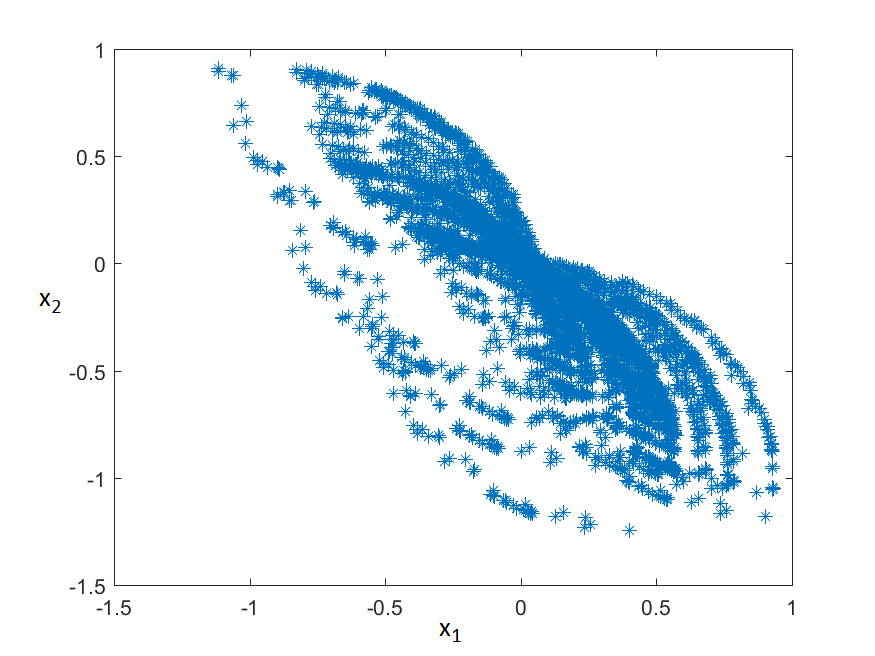}
\caption{Canonical gen. 20}
\end{subfigure}
\begin{subfigure}{0.32\textwidth}
\includegraphics[width=0.99\linewidth, height=4.2cm]{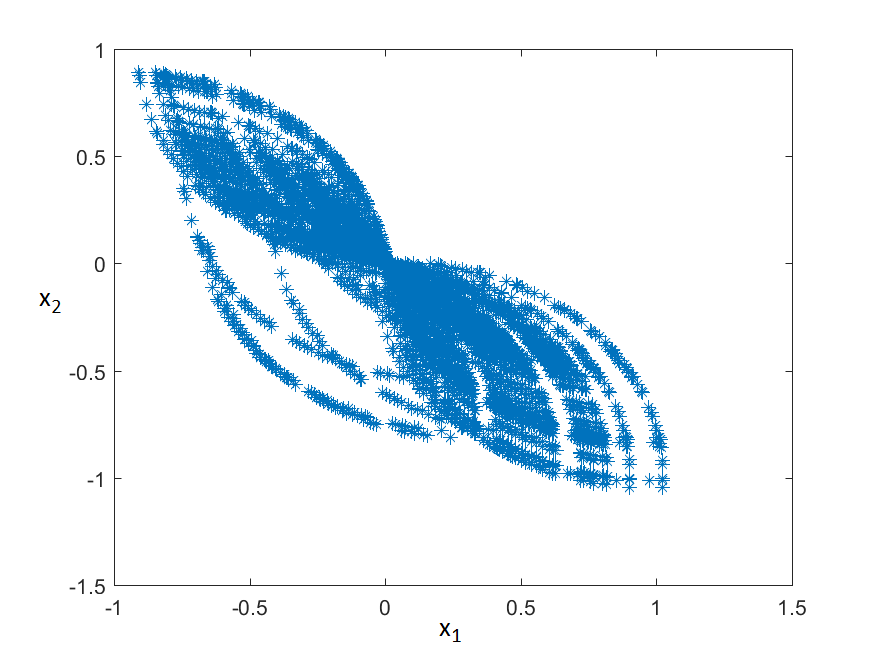}
\caption{Canonical gen. 25}
\end{subfigure}
\begin{subfigure}{0.32\textwidth}
\includegraphics[width=0.99\linewidth, height=4.2cm]{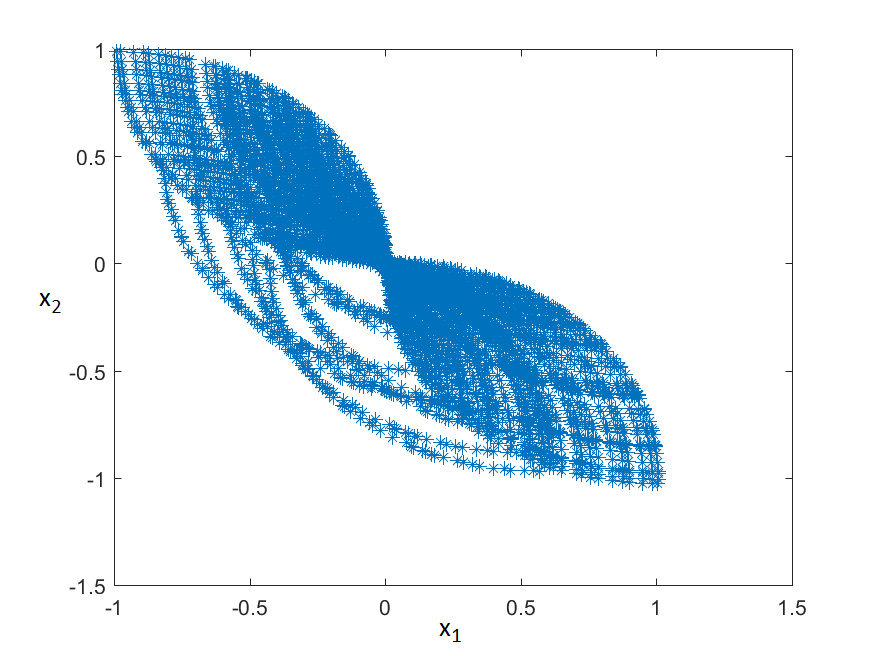}
\caption{Canonical gen. 50}
\end{subfigure}

\begin{subfigure}{0.32\textwidth}
\includegraphics[width=0.99\linewidth, height=4.2cm]{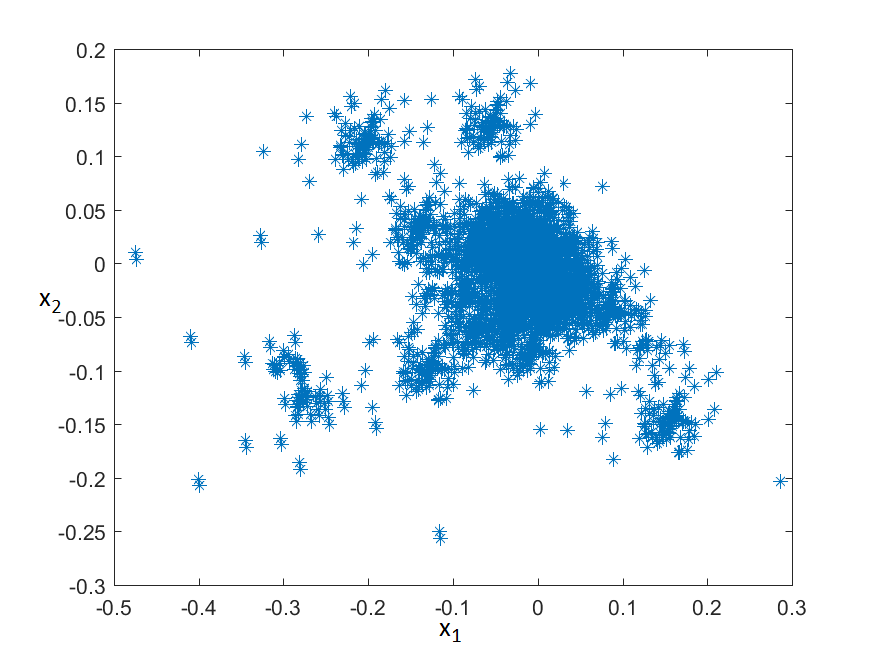}
\caption{Memetic gen. 1}
\end{subfigure}
\begin{subfigure}{0.32\textwidth}
\includegraphics[width=0.99\linewidth, height=4.2cm]{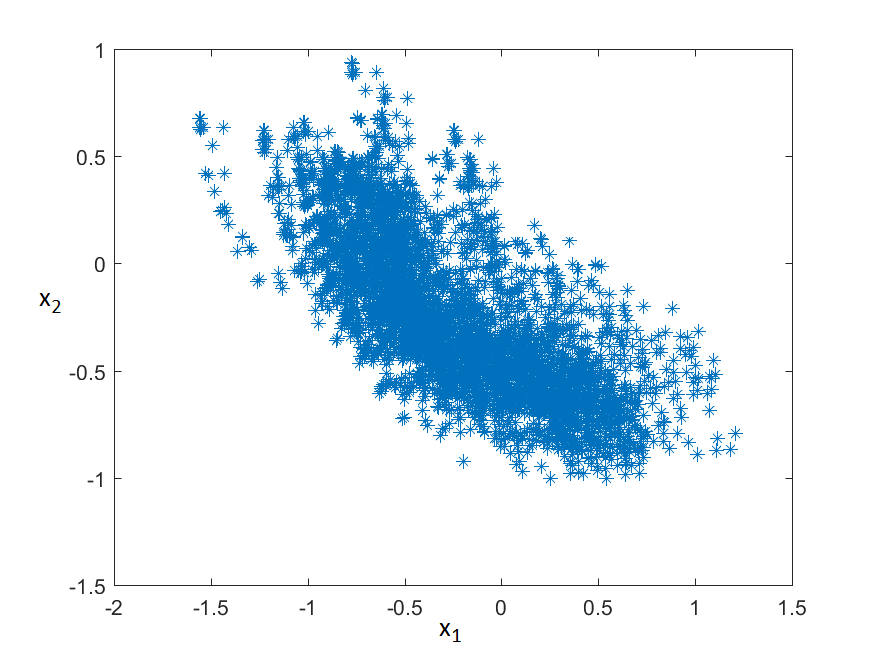}
\caption{Memetic gen. 10}
\end{subfigure}
\begin{subfigure}{0.32\textwidth}
\includegraphics[width=0.99\linewidth, height=4.2cm]{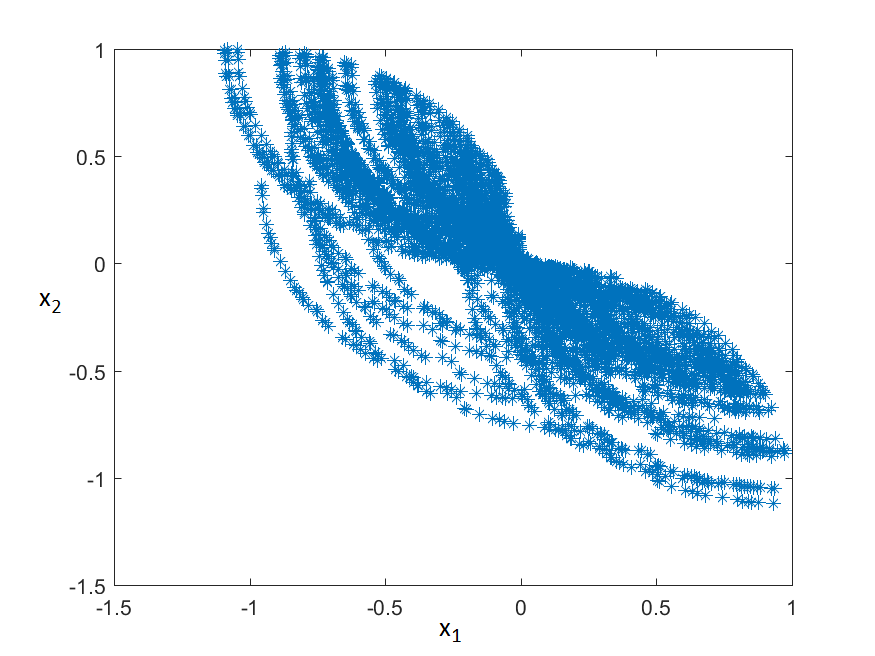}
\caption{Memetic gen. 15}
\end{subfigure}

\begin{subfigure}{0.32\textwidth}
\includegraphics[width=0.99\linewidth, height=4.2cm]{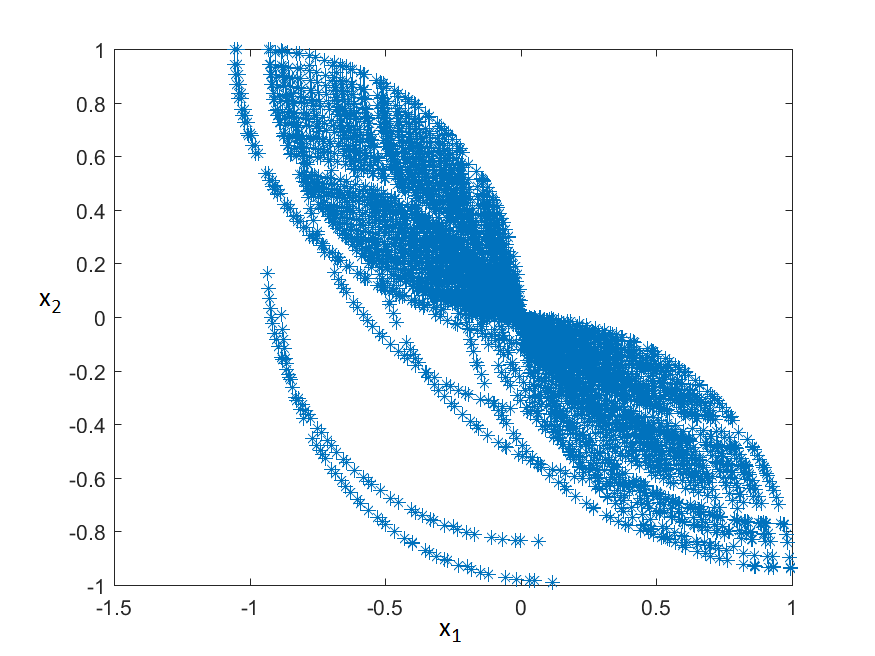}
\caption{Memetic gen. 20}
\end{subfigure}
\begin{subfigure}{0.32\textwidth}
\includegraphics[width=0.99\linewidth, height=4.2cm]{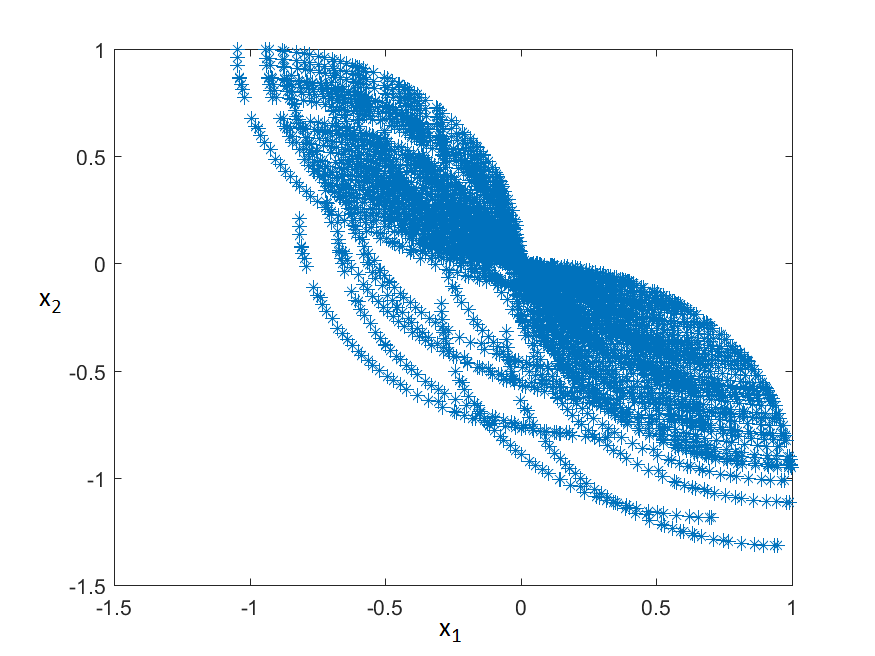}
\caption{Memetic gen. 25}
\end{subfigure}
\begin{subfigure}{0.32\textwidth}
\includegraphics[width=0.99\linewidth, height=4.2cm]{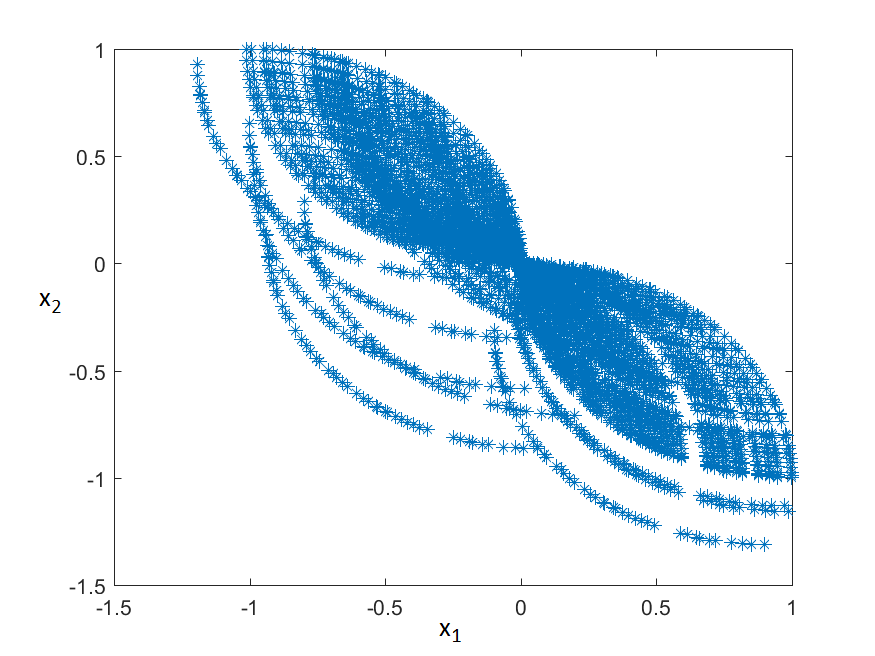}
\caption{Memetic gen. 50}
\end{subfigure}

\caption{The performance of Canonical CoEvoMOG and Memetic CoEvoMOG algorithms on the tug-of-war MOG variant with $\phi$ = Ackley 2D, after 1, 10, 15, 20, 25 and 50 generations.}
\label{fig:performanceRastrigin1D}
\end{figure}

Figures~\ref{fig:performanceAckley2D} and \ref{fig:performanceRastrigin1D} present examples of the algorithms' performance, in the objective space, as the search progresses through the generations. In each plot, each point represents the position of the mass as an outcome of the current strategies contained in the co-evolutionary subpopulations. Refer to Figure~\ref{fig:optimal_solution} for comparing these plots with the a priori known optimal solution (defined by the set of interactions between rationalizable strategies of both players).
From generations 1 to 5, both algorithms produce mostly random solutions. The main difference can be noticed to emerge after 10-25 generations when plots of the memetic algorithm are closer to the optimal solution than those of the canonical approach, which agrees with the claim of faster Memetic CoEvoMOG convergence.

Detailed numerical results after every 5 generations for each of the tested MOGs, in terms of average, minimum, maximum and standard deviation of IGD values are presented in the Appendix (Tables~\ref{tab:Rosenbrock2D_results}--\ref{tab:Ackley2D_results}). It can be observed that not only is the convergence faster, but also the final results obtained after $100$ generations are better in the case of Memetic CoEvoMOG. Moreover, the Memetic CoEvoMOG algorithm appears to be more stable - standard deviation values in most of the cases are lower. All results are proved to be statistically significant by the Wilcoxon Signed-Rank Test with p-value=0.05. The numbers of exact function evaluations were identical for both methods.

Slower convergence of the Canonical CoEvoMOG algorithm is hypothesized to be caused by the existence of the Red Queen effect described in Section~\ref{sec:Intrduction}. In most cases, after a few generations of steep decrease of the IGD value, it is observed that the IGD value tends to rise for a brief period of time in the canonical case (as revealed in Figures~\ref{fig:plots} and~\ref{fig:plots2}). This surprising observation may be due to the continuous adaptations of subpopulations to the evolutions of each other, even though the overall performance may be far from the optimum (much like the trees in the forest as discussed in the introduction). In this respect, the local search steps included in the Memetic CoEvoMOG algorithm are seen to provide a guiding hand to the purely genetic mechanisms, thereby suppressing the Red Queen effect to a large extent and accelerating the convergence characteristics of the proposed algorithm.

The above promising experimental results form a strong basis for attempts of solving more complex, real-life problems which can be represented as MOGs. In particular, multi-step decision-making problems and problems characterized by payoffs changing over time seem to be perfect candidates for further evaluation of the Memetic CoEvoMOG algorithm. Such problems appear is various practical domains, including planning and decision-making under uncertainty or in adversarial environments, e.g. in the area of cyber security or homeland security (notably Security Games~\cite{KarwowskiMandziuk2015,KarwowskiMandziuk2016}).

\section{Conclusions}
\label{sec:conclusions}

This paper presents a new memetic co-evolutionary approach (Memetic CoEvoMOG) to finding strategies for multi-objective games under postponed objective preference articulation. The proposed method improves the canonical co-evolutionary model described in~\cite{Eisenstadt2015} by suppressing the Red Queen effect via the guiding light of lifelong learning. In particular, for ensuring the computational viability of lifelong learning in competitive multi-objective game settings, we incorporate an approach that reduces sets of payoff vectors in objective space to a single representative point without disrupting the postponed preference articulation condition. Thereafter, a surrogate model of the representative point is built, which allows the local improvements to be carried out efficiently by searching on the surrogate landscape. The reliability and effectiveness of our method is experimentally proved on a suite of testing MOG variants. It is demonstrated in the paper that incorporation of memetics improves the convergence characteristics and leads to better solutions in comparison with the canonical co-evolutionary algorithm. Consequently, in the proposed method the total number of function evaluations can be reduced with no harm to the overall quality of resultant strategies. Such time savings are of special importance in the case of time-sensitive and/or computationally expensive MOGs appearing in real-life applications.

Our future research activities shall be concentrated on building upon the current foundations of the Memetic CoEvoMOG algorithms, and extending to several complex multi-step decision-making problems of practical relevance, with particular focus on domains of cyber security and Security Games.

\bibliographystyle{elsarticle-num}

\section{Acknowledgment}
The second author of the paper, Dr. Abhishek Gupta, would like to extend his sincere gratitude to Prof. Amiram Moshaiov, Tel-Aviv University, for valuable discussions on the topic of MOGs that set the foundation for the present work.
This work was partly developed while the third author, Prof. Jacek Ma{\'n}dziuk, was on leave at the School of Computer Science and Engineering, Nanyang Technological University, Singapore.
\bibliography{bibfile}

\section*{Appendix}
\label{sec:appendix}

\begin{table}[ht!]
\scriptsize
\begin{center}
\begin{tabular}{|c||r|r|r|r||r|r|r|r|} \cline{1-9}
    & \multicolumn{4}{c||} {CanonicalCoEvoMOG}
    & \multicolumn{4}{c|}{MemeticCoEvoMOG}
\\\hline
  Generation & Avg	& Min & Max	& Std dev	
  & Avg	& Min & Max	& Std dev	
 \\\hline
5 & 65.28 & 39.39 & 133.60 & 30.45 & 54.37 & 43.16 & 110.30 & 20.66 \\
10 & 108.77 & 35.07 & 249.93 & 81.80 & 49.92 & 34.45 & 58.46 & 9.44 \\
15 & 89.17 & 31.81 & 262.70 & 68.31 & 50.89 & 33.31 & 77.97 & 15.41 \\
20 & 72.94 & 27.08 & 244.94 & 63.69 & 43.86 & 28.24 & 95.44 & 19.50 \\
25 & 65.48 & 27.66 & 208.32 & 53.05 & 36.77 & 24.22 & 68.52 & 13.28 \\
30 & 57.10 & 22.71 & 203.50 & 52.95 & 30.68 & 22.28 & 38.74 & 6.32 \\
35 & 50.49 & 22.95 & 198.35 & 52.39 & 28.29 & 23.05 & 36.67 & 4.88 \\
40 & 48.01 & 21.29 & 176.34 & 45.65 & 25.72 & 20.58 & 41.32 & 6.01 \\
45 & 38.29 & 21.80 & 128.73 & 32.03 & 24.13 & 19.85 & 39.28 & 5.72 \\
50 & 37.67 & 20.70 & 102.80 & 24.04 & 23.95 & 20.75 & 36.44 & 4.75 \\
55 & 33.65 & 20.57 & 76.62 & 16.01 & 22.75 & 20.39 & 32.17 & 3.59 \\
60 & 31.71 & 21.75 & 74.82 & 15.75 & 22.57 & 20.48 & 30.74 & 3.41 \\
65 & 31.44 & 22.37 & 65.54 & 13.45 & 22.09 & 20.30 & 26.64 & 2.13 \\
70 & 32.01 & 21.74 & 68.39 & 14.49 & 21.58 & 19.59 & 24.10 & 1.40 \\
75 & 30.76 & 22.99 & 64.73 & 13.26 & 21.26 & 19.20 & 23.57 & 1.13 \\
80 & 30.32 & 21.63 & 64.61 & 13.16 & 20.92 & 18.62 & 22.70 & 1.25 \\
85 & 31.04 & 21.88 & 58.89 & 11.87 & 21.09 & 19.54 & 22.71 & 0.98 \\
90 & 30.44 & 21.64 & 57.39 & 11.35 & 21.07 & 20.11 & 22.79 & 0.76 \\
95 & 29.24 & 20.99 & 60.32 & 11.61 & 20.89 & 20.00 & 21.73 & 0.55 \\
100 & 29.45 & 20.82 & 58.14 & 11.36 & 20.96 & 19.72 & 22.89 & 0.91 \\
\hline
\end{tabular}
\end{center}
\caption{Comparison between results (in terms of IGD) obtained by Cannonical and Memetic Co-evolutionary Algorithms based on 20 independent runs of the tug-of-war MOG variant with $\phi$ = Rosenbrock 2D.}
\label{tab:Rosenbrock2D_results}
\end{table}

\begin{table}[ht!]
\scriptsize
\begin{center}
\begin{tabular}{|c||r|r|r|r||r|r|r|r|} \cline{1-9}
    & \multicolumn{4}{c||} {CanonicalCoEvoMOG}
    & \multicolumn{4}{c|}{MemeticCoEvoMOG}
\\\hline
  Generation & Avg	& Min & Max	& Std dev	
  & Avg	& Min & Max	& Std dev	
 \\\hline
5 & 221.43 & 62.10 & 422.71 & 121.20 & 159.70 & 98.87 & 210.68 & 30.68 \\
10 & 139.15 & 94.28 & 235.24 & 40.96 & 127.09 & 65.20 & 344.80 & 80.01 \\
15 & 163.41 & 85.68 & 348.19 & 77.72 & 96.67 & 40.44 & 177.86 & 41.80 \\
20 & 142.60 & 40.23 & 333.12 & 85.41 & 107.87 & 55.31 & 232.53 & 56.76 \\
25 & 156.23 & 34.67 & 325.21 & 83.47 & 88.77 & 44.01 & 169.74 & 42.91 \\
30 & 156.85 & 37.83 & 392.68 & 99.68 & 91.32 & 49.53 & 240.98 & 55.93 \\
35 & 148.23 & 31.43 & 388.38 & 98.04 & 81.51 & 37.89 & 206.72 & 50.32 \\
40 & 133.30 & 33.33 & 334.02 & 91.26 & 74.29 & 38.13 & 193.34 & 44.43 \\
45 & 140.77 & 32.21 & 356.72 & 97.36 & 67.55 & 34.48 & 164.75 & 37.58 \\
50 & 130.92 & 30.93 & 356.50 & 95.12 & 53.79 & 32.19 & 126.93 & 27.58 \\
55 & 131.97 & 27.86 & 359.18 & 98.65 & 47.50 & 33.15 & 108.08 & 23.51 \\
60 & 123.37 & 26.57 & 351.25 & 95.70 & 45.11 & 28.85 & 86.73 & 16.29 \\
65 & 118.80 & 26.21 & 345.35 & 93.30 & 41.47 & 26.04 & 84.47 & 16.83 \\
70 & 121.05 & 26.22 & 401.82 & 110.77 & 39.08 & 24.33 & 73.32 & 15.01 \\
75 & 108.20 & 27.95 & 292.76 & 81.13 & 38.25 & 26.62 & 86.44 & 17.80 \\
80 & 112.03 & 26.12 & 302.42 & 82.19 & 38.89 & 22.26 & 88.13 & 20.71 \\
85 & 110.11 & 25.34 & 302.54 & 81.86 & 36.36 & 22.45 & 79.01 & 17.17 \\
90 & 109.76 & 24.38 & 325.80 & 85.75 & 32.38 & 22.31 & 61.75 & 12.70 \\
95 & 116.73 & 23.13 & 319.10 & 84.37 & 32.46 & 22.90 & 59.00 & 11.87 \\
100 & 112.51 & 22.08 & 318.41 & 84.95 & 31.13 & 22.93 & 47.49 & 7.96 \\
\hline
\end{tabular}
\end{center}
\caption{Comparison between results (in terms of IGD) obtained by Cannonical and Memetic Co-evolutionary Algorithms based on 20 independent runs of the tug-of-war MOG variant with $\phi$ = Rosenbrock 3D.}
\label{tab:Rosenbrock3D_results}
\end{table}

\begin{table}[ht!]
\scriptsize
\begin{center}
\begin{tabular}{|c||r|r|r|r||r|r|r|r|} \cline{1-9}
    & \multicolumn{4}{c||} {CanonicalCoEvoMOG}
    & \multicolumn{4}{c|}{MemeticCoEvoMOG}
\\\hline
  Generation & Avg	& Min & Max	& Std dev	
  & Avg	& Min & Max	& Std dev	
 \\\hline
5 & 82.69 & 35.23 & 182.31 & 51.71 & 65.59 & 34.88 & 128.11 & 36.63 \\
10 & 115.71 & 40.70 & 532.32 & 149.59 & 57.21 & 28.04 & 100.62 & 26.31 \\
15 & 138.51 & 31.11 & 822.00 & 241.86 & 25.37 & 19.83 & 37.58 & 5.31 \\
20 & 131.23 & 24.76 & 858.55 & 257.37 & 22.61 & 20.40 & 30.55 & 3.02 \\
25 & 124.50 & 25.75 & 843.10 & 253.38 & 21.04 & 19.59 & 22.69 & 1.11 \\
30 & 114.05 & 24.59 & 820.08 & 248.31 & 21.37 & 20.13 & 22.45 & 0.76 \\
35 & 112.73 & 22.83 & 821.77 & 249.36 & 21.09 & 20.27 & 22.50 & 0.80 \\
40 & 109.67 & 21.45 & 821.12 & 250.10 & 21.43 & 20.36 & 22.84 & 0.75 \\
45 & 106.52 & 22.63 & 820.33 & 250.87 & 21.20 & 19.02 & 22.45 & 1.02 \\
50 & 104.97 & 20.45 & 819.34 & 251.06 & 21.74 & 19.67 & 23.46 & 0.96 \\
55 & 104.66 & 20.80 & 819.51 & 251.20 & 20.88 & 19.91 & 22.35 & 0.64 \\
60 & 103.91 & 21.29 & 820.20 & 251.69 & 20.99 & 19.35 & 22.76 & 1.05 \\
65 & 103.19 & 20.91 & 819.72 & 251.77 & 21.34 & 19.87 & 22.10 & 0.64 \\
70 & 103.65 & 22.21 & 819.06 & 251.37 & 20.97 & 19.91 & 21.90 & 0.75 \\
75 & 102.98 & 20.52 & 819.22 & 251.66 & 21.38 & 20.20 & 22.76 & 0.85 \\
80 & 102.24 & 19.00 & 817.93 & 251.48 & 21.18 & 19.12 & 22.76 & 1.08 \\
85 & 102.15 & 18.99 & 819.12 & 251.93 & 20.72 & 19.22 & 21.88 & 0.95 \\
90 & 102.29 & 21.44 & 818.64 & 251.70 & 20.95 & 19.94 & 22.48 & 0.79 \\
95 & 101.57 & 20.84 & 818.62 & 251.95 & 21.11 & 19.84 & 23.26 & 1.00 \\
100 & 101.78 & 21.07 & 818.19 & 251.72 & 21.00 & 18.74 & 22.56 & 1.26 \\

\hline
\end{tabular}
\end{center}
\caption{Comparison between results (in terms of IGD) obtained by Cannonical and Memetic Co-evolutionary Algorithms based on 20 independent runs of the tug-of-war MOG variant with $\phi$ = Rastrigin 1D.}
\label{tab:Rastrigin1D_results}
\end{table}

\begin{table}[ht!]
\scriptsize
\begin{center}
\begin{tabular}{|c||r|r|r|r||r|r|r|r|} \cline{1-9}
    & \multicolumn{4}{c||} {CanonicalCoEvoMOG}
    & \multicolumn{4}{c|}{MemeticCoEvoMOG}
\\\hline
  Generation & Avg	& Min & Max	& Std dev	
  & Avg	& Min & Max	& Std dev	
 \\\hline
5 & 267.25 & 105.65 & 401.83 & 97.01 & 167.62 & 62.95 & 347.25 & 79.07 \\
10 & 112.35 & 30.70 & 320.32 & 96.15 & 72.66 & 31.69 & 240.47 & 69.85 \\
15 & 129.75 & 37.76 & 424.67 & 126.87 & 47.83 & 21.72 & 194.51 & 52.01 \\
20 & 130.90 & 35.54 & 492.63 & 142.76 & 58.24 & 20.34 & 369.19 & 109.29 \\
25 & 108.05 & 24.44 & 374.31 & 117.24 & 42.87 & 20.19 & 230.27 & 65.86 \\
30 & 81.50 & 21.98 & 306.10 & 87.68 & 36.85 & 21.13 & 170.69 & 47.03 \\
35 & 64.85 & 21.36 & 195.60 & 58.29 & 44.76 & 19.40 & 251.54 & 72.67 \\
40 & 48.32 & 20.80 & 121.05 & 33.20 & 51.40 & 20.40 & 322.40 & 95.23 \\
45 & 44.62 & 21.74 & 97.24 & 28.69 & 33.58 & 20.11 & 141.78 & 38.03 \\
50 & 36.43 & 21.50 & 76.02 & 17.43 & 41.13 & 19.03 & 219.55 & 62.71 \\
55 & 32.11 & 21.30 & 60.89 & 13.70 & 52.31 & 20.09 & 330.97 & 97.91 \\
60 & 29.49 & 20.61 & 50.33 & 10.26 & 42.92 & 20.11 & 237.23 & 68.28 \\
65 & 28.88 & 21.08 & 46.79 & 9.63 & 50.44 & 20.18 & 312.75 & 92.17 \\
70 & 26.92 & 21.45 & 39.10 & 6.44 & 52.59 & 20.12 & 336.27 & 99.68 \\
75 & 26.61 & 20.68 & 40.84 & 7.53 & 49.01 & 19.74 & 299.81 & 88.13 \\
80 & 26.55 & 21.64 & 42.22 & 6.31 & 36.21 & 20.23 & 168.79 & 46.59 \\
85 & 25.01 & 21.43 & 33.32 & 3.94 & 48.01 & 20.05 & 281.77 & 82.14 \\
90 & 24.29 & 21.40 & 28.88 & 2.93 & 49.40 & 20.32 & 302.58 & 88.96 \\
95 & 23.86 & 19.51 & 30.49 & 3.28 & 46.31 & 20.26 & 267.55 & 77.74 \\
100 & 23.72 & 20.77 & 29.40 & 3.01 & 47.22 & 19.79 & 283.01 & 82.86 \\
\hline
\end{tabular}
\end{center}
\caption{Comparison between results (in terms of IGD) obtained by Cannonical and Memetic Co-evolutionary Algorithms based on 20 independent runs of the tug-of-war MOG variant with $\phi$ = Rastrigin 2D.}
\label{tab:Rastrigin2D_results}
\end{table}

\begin{table}[ht!]
\scriptsize
\begin{center}
\begin{tabular}{|c||r|r|r|r||r|r|r|r|} \cline{1-9}
    & \multicolumn{4}{c||} {CanonicalCoEvoMOG}
    & \multicolumn{4}{c|}{MemeticCoEvoMOG}
\\\hline
  Generation & Avg	& Min & Max	& Std dev	
  & Avg	& Min & Max	& Std dev	
 \\\hline
5 & 539.35 & 264.70 & 793.37 & 170.54 & 473.73 & 257.24 & 708.45 & 122.06 \\
10 & 297.53 & 109.05 & 497.63 & 122.76 & 195.57 & 28.67 & 559.76 & 172.42 \\
15 & 246.47 & 47.36 & 574.07 & 177.01 & 164.72 & 42.34 & 483.88 & 175.39 \\
20 & 253.62 & 45.83 & 514.96 & 172.03 & 128.32 & 29.95 & 538.47 & 168.63 \\
25 & 253.72 & 43.73 & 559.89 & 182.96 & 115.86 & 22.74 & 404.96 & 146.92 \\
30 & 239.35 & 32.67 & 555.56 & 185.00 & 103.92 & 20.35 & 362.89 & 135.21 \\
35 & 226.37 & 24.05 & 520.98 & 187.45 & 140.85 & 19.90 & 585.08 & 205.77 \\
40 & 200.78 & 25.86 & 570.16 & 185.05 & 100.02 & 19.67 & 359.16 & 133.79 \\
45 & 208.77 & 24.15 & 487.22 & 185.19 & 101.88 & 19.90 & 381.89 & 134.46 \\
50 & 189.19 & 23.75 & 483.84 & 180.69 & 121.58 & 20.75 & 616.96 & 193.50 \\
55 & 171.68 & 23.93 & 431.40 & 170.07 & 106.75 & 20.52 & 350.30 & 138.17 \\
60 & 161.08 & 23.21 & 460.39 & 161.00 & 119.49 & 20.42 & 568.65 & 182.07 \\
65 & 170.20 & 22.22 & 499.57 & 182.10 & 100.48 & 20.56 & 452.16 & 144.64 \\
70 & 161.04 & 23.58 & 464.79 & 171.77 & 131.46 & 20.61 & 488.36 & 184.60 \\
75 & 170.60 & 23.12 & 455.10 & 183.43 & 129.83 & 20.16 & 427.89 & 178.37 \\
80 & 183.36 & 22.01 & 549.62 & 209.91 & 125.35 & 19.79 & 560.23 & 185.42 \\
85 & 156.82 & 22.85 & 400.55 & 173.16 & 130.58 & 21.33 & 566.23 & 190.19 \\
90 & 170.47 & 21.48 & 450.96 & 192.60 & 118.76 & 20.29 & 446.60 & 161.69 \\
95 & 158.73 & 21.84 & 434.21 & 176.66 & 99.18 & 20.16 & 340.70 & 127.85 \\
100 & 162.51 & 20.60 & 492.29 & 188.31 & 121.98 & 20.07 & 433.02 & 165.82 \\
\hline
\end{tabular}
\end{center}
\caption{Comparison between results (in terms of IGD) obtained by Cannonical and Memetic Co-evolutionary Algorithms based on 20 independent runs of the tug-of-war MOG variant with $\phi$ = Rastrigin 3D.}
\label{tab:Rastrigin3D_results}
\end{table}

\begin{table}[ht!]
\scriptsize
\begin{center}
\begin{tabular}{|c||r|r|r|r||r|r|r|r|} \cline{1-9}
    & \multicolumn{4}{c||} {CanonicalCoEvoMOG}
    & \multicolumn{4}{c|}{MemeticCoEvoMOG}
\\\hline
  Generation & Avg	& Min & Max	& Std dev	
  & Avg	& Min & Max	& Std dev	
 \\\hline
5 & 61.50 & 28.44 & 145.25 & 35.46 & 38.47 & 30.15 & 83.04 & 15.51 \\
10 & 45.79 & 22.07 & 110.52 & 25.45 & 34.29 & 20.95 & 46.41 & 7.90 \\
15 & 35.56 & 21.10 & 85.59 & 20.25 & 22.36 & 20.52 & 28.91 & 2.58 \\
20 & 30.16 & 22.15 & 62.02 & 11.87 & 20.61 & 20.21 & 23.69 & 1.02 \\
25 & 27.06 & 21.44 & 47.33 & 7.92 & 20.93 & 19.66 & 22.52 & 1.03 \\
30 & 25.77 & 21.85 & 40.16 & 5.61 & 20.85 & 19.59 & 22.33 & 0.83 \\
35 & 24.12 & 20.17 & 35.10 & 4.53 & 20.38 & 20.12 & 22.01 & 0.67 \\
40 & 23.20 & 20.67 & 31.75 & 3.34 & 20.99 & 19.47 & 22.19 & 0.92 \\
45 & 23.05 & 20.51 & 30.91 & 2.90 & 20.51 & 19.69 & 23.65 & 1.36 \\
50 & 23.39 & 21.29 & 28.48 & 2.18 & 21.15 & 19.62 & 21.89 & 0.89 \\
55 & 22.20 & 20.43 & 25.15 & 1.57 & 20.55 & 19.51 & 22.29 & 0.93 \\
60 & 22.57 & 21.01 & 25.76 & 1.68 & 20.92 & 19.79 & 22.49 & 0.88 \\
65 & 21.98 & 20.64 & 24.91 & 1.38 & 21.93 & 20.73 & 22.90 & 0.75 \\
70 & 22.25 & 20.61 & 23.89 & 1.19 & 20.33 & 19.75 & 23.45 & 1.16 \\
75 & 21.98 & 20.42 & 23.34 & 0.83 & 21.00 & 19.38 & 22.49 & 1.00 \\
80 & 21.94 & 19.51 & 23.72 & 1.20 & 20.96 & 19.72 & 21.81 & 0.71 \\
85 & 21.94 & 20.63 & 23.16 & 0.79 & 21.02 & 20.18 & 22.81 & 0.85 \\
90 & 22.08 & 20.66 & 24.07 & 0.94 & 20.84 & 19.90 & 21.68 & 0.59 \\
95 & 21.66 & 20.02 & 23.56 & 1.06 & 20.39 & 19.71 & 22.63 & 0.98 \\
100 & 21.65 & 20.10 & 23.28 & 1.12 & 20.75 & 19.86 & 21.80 & 0.69 \\
\hline
\end{tabular}
\end{center}
\caption{Comparison between results (in terms of IGD) obtained by Cannonical and Memetic Co-evolutionary Algorithms based on 20 independent runs of the tug-of-war MOG variant with $\phi$ = Griewank 1D.}
\label{tab:Griewank1D_results}
\end{table}

\begin{table}[ht!]
\scriptsize
\begin{center}
\begin{tabular}{|c||r|r|r|r||r|r|r|r|} \cline{1-9}
    & \multicolumn{4}{c||} {CanonicalCoEvoMOG}
    & \multicolumn{4}{c|}{MemeticCoEvoMOG}
\\\hline
  Generation & Avg	& Min & Max	& Std dev	
  & Avg	& Min & Max	& Std dev	
 \\\hline
5 & 44.79 & 29.18 & 72.61 & 14.84 & 44.64 & 28.02 & 72.71 & 13.84 \\
10 & 52.15 & 42.66 & 69.53 & 10.57 & 41.19 & 30.52 & 50.83 & 6.56 \\
15 & 43.86 & 25.76 & 74.00 & 15.32 & 26.42 & 21.14 & 37.49 & 4.74 \\
20 & 38.64 & 24.60 & 64.01 & 13.00 & 22.03 & 20.17 & 24.37 & 1.41 \\
25 & 35.14 & 23.91 & 58.30 & 10.39 & 21.33 & 20.24 & 24.37 & 1.14 \\
30 & 31.45 & 22.79 & 47.51 & 7.35 & 21.30 & 19.44 & 22.20 & 0.84 \\
35 & 28.89 & 22.49 & 44.95 & 6.76 & 21.28 & 19.82 & 21.95 & 0.69 \\
40 & 27.76 & 21.46 & 45.63 & 7.11 & 21.42 & 19.47 & 22.31 & 0.94 \\
45 & 26.20 & 22.20 & 35.58 & 3.93 & 20.88 & 19.87 & 22.15 & 0.73 \\
50 & 24.90 & 21.05 & 33.30 & 3.43 & 21.11 & 19.47 & 24.04 & 1.27 \\
55 & 23.89 & 20.63 & 32.65 & 3.35 & 20.72 & 19.76 & 22.16 & 0.67 \\
60 & 23.57 & 21.02 & 31.07 & 2.94 & 21.17 & 20.11 & 22.59 & 0.76 \\
65 & 23.06 & 21.07 & 27.99 & 2.06 & 21.17 & 19.45 & 22.16 & 0.79 \\
70 & 23.00 & 20.68 & 27.28 & 1.89 & 21.05 & 20.10 & 21.96 & 0.60 \\
75 & 22.66 & 20.98 & 26.23 & 1.64 & 21.10 & 20.27 & 22.10 & 0.66 \\
80 & 22.75 & 21.29 & 23.93 & 0.79 & 21.25 & 20.72 & 22.08 & 0.45 \\
85 & 22.56 & 20.31 & 25.34 & 1.61 & 21.01 & 18.76 & 22.23 & 1.17 \\
90 & 22.22 & 20.94 & 23.16 & 0.84 & 21.47 & 20.02 & 22.76 & 0.92 \\
95 & 23.08 & 21.81 & 24.83 & 1.23 & 20.83 & 20.19 & 22.05 & 0.69 \\
100 & 22.79 & 20.68 & 24.22 & 1.13 & 20.78 & 19.84 & 22.29 & 0.71 \\
\hline
\end{tabular}
\end{center}
\caption{Comparison between results (in terms of IGD) obtained by Cannonical and Memetic Co-evolutionary Algorithms based on 20 independent runs of the tug-of-war MOG variant with $\phi$ = Griewank 2D.}
\label{tab:Griewank2D_results}
\end{table}

\begin{table}[ht!]
\scriptsize
\begin{center}
\begin{tabular}{|c||r|r|r|r||r|r|r|r|} \cline{1-9}
    & \multicolumn{4}{c||} {CanonicalCoEvoMOG}
    & \multicolumn{4}{c|}{MemeticCoEvoMOG}
\\\hline
  Generation & Avg	& Min & Max	& Std dev	
  & Avg	& Min & Max	& Std dev	
 \\\hline
5 & 50.06 & 31.19 & 65.34 & 11.21 & 45.12 & 33.66 & 75.94 & 13.48 \\
10 & 64.26 & 34.18 & 117.82 & 25.48 & 38.02 & 27.80 & 55.44 & 9.10 \\
15 & 44.51 & 26.98 & 70.69 & 12.33 & 25.65 & 22.56 & 34.38 & 4.38 \\
20 & 32.81 & 26.98 & 66.00 & 11.97 & 22.74 & 20.08 & 26.94 & 2.03 \\
25 & 28.11 & 21.81 & 52.40 & 8.96 & 21.88 & 20.92 & 25.94 & 1.50 \\
30 & 25.05 & 21.83 & 34.17 & 3.71 & 21.42 & 20.35 & 23.67 & 1.16 \\
35 & 23.85 & 20.58 & 31.20 & 2.89 & 21.62 & 19.97 & 22.98 & 0.92 \\
40 & 23.47 & 20.72 & 33.44 & 3.76 & 21.79 & 19.97 & 23.09 & 0.96 \\
45 & 23.81 & 20.25 & 29.19 & 2.57 & 21.31 & 19.70 & 21.95 & 0.74 \\
50 & 22.87 & 20.80 & 25.21 & 1.48 & 21.97 & 19.46 & 23.69 & 1.11 \\
55 & 22.58 & 21.18 & 25.99 & 1.37 & 21.07 & 19.83 & 21.68 & 0.62 \\
60 & 22.79 & 21.33 & 26.38 & 1.63 & 21.62 & 20.11 & 22.02 & 0.72 \\
65 & 23.85 & 21.12 & 27.83 & 2.28 & 21.28 & 19.97 & 22.70 & 0.82 \\
70 & 22.80 & 20.89 & 27.65 & 1.96 & 20.53 & 19.51 & 22.65 & 1.02 \\
75 & 22.61 & 20.35 & 25.07 & 1.47 & 21.77 & 19.85 & 22.99 & 1.03 \\
80 & 22.06 & 19.95 & 24.24 & 1.17 & 21.18 & 19.84 & 23.27 & 0.98 \\
85 & 22.65 & 20.03 & 24.90 & 1.64 & 21.34 & 20.16 & 22.88 & 0.86 \\
90 & 22.42 & 21.07 & 23.86 & 0.90 & 21.32 & 19.43 & 23.19 & 1.19 \\
95 & 22.36 & 20.57 & 25.39 & 1.47 & 21.39 & 20.46 & 22.68 & 0.67 \\
100 & 22.00 & 20.92 & 24.34 & 1.06 & 21.04 & 19.45 & 22.91 & 1.09 \\
\hline
\end{tabular}
\end{center}
\caption{Comparison between results (in terms of IGD) obtained by Cannonical and Memetic Co-evolutionary Algorithms based on 20 independent runs of the tug-of-war MOG variant with $\phi$ = Griewank 3D.}
\label{tab:Griewank3D_results}
\end{table}

\begin{table}[ht!]
\scriptsize
\begin{center}
\begin{tabular}{|c||r|r|r|r||r|r|r|r|} \cline{1-9}
    & \multicolumn{4}{c||} {CanonicalCoEvoMOG}
    & \multicolumn{4}{c|}{MemeticCoEvoMOG}
\\\hline
  Generation & Avg	& Min & Max	& Std dev	
  & Avg	& Min & Max	& Std dev	
 \\\hline
5 & 226.76 & 99.59 & 383.74 & 95.09 & 126.66 & 37.18 & 322.69 & 85.99 \\
10 & 199.62 & 42.24 & 409.33 & 129.43 & 48.78 & 31.50 & 68.11 & 12.33 \\
15 & 182.56 & 48.71 & 398.80 & 106.92 & 31.29 & 22.81 & 41.76 & 5.78 \\
20 & 159.47 & 39.99 & 427.36 & 116.25 & 23.55 & 20.76 & 27.76 & 2.19 \\
25 & 137.04 & 32.40 & 336.20 & 93.39 & 21.37 & 19.29 & 22.29 & 0.89 \\
30 & 114.79 & 30.98 & 283.01 & 83.79 & 21.12 & 20.38 & 23.07 & 0.81 \\
35 & 98.50 & 27.48 & 203.53 & 63.03 & 21.16 & 20.16 & 22.22 & 0.79 \\
40 & 94.67 & 25.11 & 261.62 & 72.15 & 20.87 & 19.95 & 22.16 & 0.68 \\
45 & 68.56 & 23.42 & 145.00 & 41.81 & 20.83 & 19.03 & 22.55 & 1.29 \\
50 & 55.32 & 22.41 & 122.20 & 32.63 & 20.62 & 19.56 & 22.34 & 0.96 \\
55 & 52.46 & 21.91 & 93.02 & 26.93 & 21.14 & 20.06 & 22.37 & 0.83 \\
60 & 51.33 & 20.29 & 118.80 & 30.57 & 21.10 & 19.50 & 22.20 & 0.95 \\
65 & 43.51 & 21.67 & 76.99 & 20.07 & 21.07 & 19.64 & 22.31 & 0.85 \\
70 & 39.83 & 22.65 & 82.61 & 19.24 & 20.86 & 19.17 & 21.67 & 0.88 \\
75 & 37.38 & 21.74 & 96.71 & 23.09 & 20.92 & 20.09 & 22.59 & 0.75 \\
80 & 32.96 & 21.08 & 71.85 & 15.29 & 20.63 & 18.94 & 21.56 & 0.98 \\
85 & 33.67 & 21.33 & 67.30 & 15.18 & 21.08 & 19.11 & 22.42 & 0.98 \\
90 & 29.98 & 21.45 & 55.10 & 10.64 & 21.17 & 19.63 & 23.59 & 1.36 \\
95 & 29.84 & 21.26 & 64.97 & 13.10 & 20.82 & 18.95 & 22.32 & 1.00 \\
100 & 27.13 & 22.15 & 44.12 & 6.91 & 21.00 & 19.98 & 22.07 & 0.61 \\
\hline
\end{tabular}
\end{center}
\caption{Comparison between results (in terms of IGD) obtained by Cannonical and Memetic Co-evolutionary Algorithms based on 20 independent runs of the tug-of-war MOG variant with $\phi$ = Ackley 2D.}
\label{tab:Ackley2D_results}
\end{table}

\end{document}